\newtheorem{theorem}{Theorem}[section]
\newtheorem{lemma}[theorem]{Lemma}
\newtheorem{cor}[theorem]{Corollary}
\newtheorem{definition}[theorem]{Definition}
\renewcommand{\eqref}[1]{Eq.~(\ref{#1})}
\newcommand{\secref}[1]{Section~\ref{#1}}
\newcommand{\thmref}[1]{Theorem~\ref{#1}}
\newcommand{\lemref}[1]{Lemma~\ref{#1}}
\newcommand{\defref}[1]{Def.~\ref{#1}}
\newcommand{\tr}{\textrm{tr}}
\renewcommand{\P}{\mathbb{P}}
\newcommand{\E}{\mathbb{E}}
\newcommand{\reals}{\mathbb{R}}
\newcommand{\diag}{\textrm{diag}}
\newcommand{\conv}{\textrm{conv}}
\newcommand{\half}{{\frac12}}
\newcommand{\cX}{\mathcal{X}}
\newcommand{\cF}{\mathcal{F}}
\newcommand{\cA}{\mathcal{A}}
\newcommand{\cD}{\mathcal{D}}
\newcommand{\cW}{\mathcal{W}}
\newcommand{\cN}{\mathcal{N}}
\newcommand{\cC}{\mathcal{C}}
\newcommand{\ignore}[1]{}
\DeclareMathOperator*{\argmin}{argmin}
\newcommand{\loss}{\ell}
\newcommand{\binm}{\{\pm 1\}^m}
\newcommand{\norm}[1]{\|#1\|}
\newcommand{\dotprod}[1]{\langle #1 \rangle}
\newcommand{\ceil}[1]{{\lceil #1 \rceil}}
\newcommand{\ball}{\mathbf{B}^d_1}
\newcommand{\trace}{\textrm{trace}}
\newcommand{\kgname}{adapted-dimension}
\newcommand{\rmom}{\rho}
\newcommand{\submitted}[1]{}
\newcommand{\final}[1]{#1}
\title{Tight Sample Complexity of Large-Margin Learning}
\author{Sivan Sabato$^1$~~~Nathan Srebro$^2$~~~Naftali Tishby$^1$\\
$^1$ School of Computer Science \& Engineering, The Hebrew University, Jerusalem 91904, Israel\\
$^2$ Toyota Technological Institute at Chicago, Chicago, IL 60637, USA\\
\texttt{\{sivan\_sabato,tishby\}@cs.huji.ac.il, nati@ttic.edu}
}
\begin{document}

\maketitle

\begin{abstract}
  We obtain a tight distribution-specific characterization of the
  sample complexity of large-margin classification with $L_2$
  regularization: We introduce the \mbox{$\gamma$-\kgname}, which is  a
  simple function of the spectrum of a distribution's covariance matrix,
  and show
  distribution-specific upper {\em and} lower bounds on the sample
  complexity, both governed by the $\gamma$-\kgname\ of the source
  distribution.  We conclude that this new quantity tightly
  characterizes the true sample complexity of large-margin
  classification.  The bounds hold for 
  a rich family of sub-Gaussian distributions.
\end{abstract}

\section{Introduction}

In this paper we tackle the problem of obtaining a tight
characterization of the sample complexity which a particular
learning rule requires, in order to learn a particular source distribution.
Specifically, we obtain a tight characterization of the sample
complexity required for large (Euclidean) margin learning to obtain
low error for a distribution $D(X,Y)$, for $X \in \reals^d$, $Y
\in \{\pm 1\}$.


Most learning theory work focuses on upper-bounding the sample
complexity.  That is, on providing a bound
$\overline{m}(D,\epsilon)$ and proving that when using some specific learning rule, if the sample size is
at least $\overline{m}(D,\epsilon)$, an excess error of
at most $\epsilon$ (in expectation or with high probability) can be
ensured. For instance, for large-margin
classification we know that if $P_D[\norm{X}\leq B]=1$, then
$\overline{m}(D,\epsilon)$ can be set to $O(B^2/(\gamma^2\epsilon^2))$
to get true error of no more than $\loss^*_{\gamma} + \epsilon$,
where $\loss^*_{\gamma}=\min_{\norm{w}\leq1}P_D(Y\dotprod{w,X}\leq\gamma)$
is the optimal margin error at margin $\gamma$.

Such upper bounds can be useful for understanding positive aspects of
a learning rule.  But it is difficult to understand deficiencies of a
learning rule, or to compare between different rules, based on upper
bounds alone.  After all, it is possible, and often the case, that the
true sample complexity, i.e.~the actual number of samples required to
get low error, is much lower than the bound. 

Of course, some sample complexity upper bounds are known to be ``tight'' or to have an almost-matching 
lower bound.  This usually
means that the bound is tight as a worst-case upper bound for a
specific class of distributions (e.g.~all those with $P_D[\norm{X}\leq
B]=1$).  That is, there exists {\em some} source distribution for
which the bound is tight.
In other words, the bound concerns some
quantity of the distribution (e.g.~the radius of the support), and is
the lowest possible bound {\em in terms of this quantity}.  But this
is not to say that for any {\em specific} distribution this quantity
tightly characterizes the sample complexity.  For instance, we know that the
sample complexity can be much smaller than the radius of the support
of $X$, if the average norm $\sqrt{\E[\norm{X}^2]}$ is small.
However, $\E[\norm{X}^2]$ is also not a precise characterization of the sample complexity,
for instance in low dimensions.

The goal of this paper is to identify a simple quantity determined by the
distribution that {\em does} precisely characterize the sample
complexity.  That is, such that the actual sample complexity for the
learning rule on this {\em specific} distribution is
governed, up to polylogarithmic factors, by this quantity.

In particular, we present the $\gamma$-\kgname\ 
$k_{\gamma}(D)$. This measure refines both the dimension and the average norm of $X$, and it
can be easily calculated from the covariance matrix of $X$. We show
that for a rich family of ``light tailed'' distributions (specifically,
sub-Gaussian distributions with independent uncorrelated directions -- see
Section \ref{sec:subg}), the number of samples required for learning
by minimizing the $\gamma$-margin-violations is both lower-bounded and
upper-bounded by $\tilde{\Theta}(k_{\gamma})$. More precisely, we
show that the sample complexity $m(\epsilon,\gamma,D)$ required for
achieving excess error of no more than $\epsilon$ can be bounded from
above and from below by:
\begin{equation*}
 \Omega(k_\gamma(D)) \leq m(\epsilon,\gamma,D) \leq
 \tilde{O}(\frac{k_{\gamma}(D)}{\epsilon^2}).
\end{equation*}

As can be seen in this bound, we are {\em not} concerned about
tightly characterizing the dependence of the sample complexity on the
desired error \citep[as done e.g. in ][]{SteinwartSc07}, nor with obtaining tight bounds for very small error
levels.  In fact, our results can be interpreted as studying the
sample complexity needed to obtain error well below random, but
bounded away from zero.  This is in contrast to classical statistics
asymptotic that are also typically tight, but are valid only for
very small $\epsilon$.  As was recently shown by Liang and Srebro
\citep{LiangSr10}, the quantities on which the sample
complexity depends on for very small $\epsilon$ (in the classical
statistics asymptotic regime) can be very different from those for
moderate error rates, which are more relevant for machine learning.

Our tight characterization, and in particular the distribution-specific lower bound on the
sample complexity that we establish, can be used to compare large-margin ($L_2$ regularized) learning to other learning rules.  In
Section \ref{sec:examples} we provide two such examples: we use our lower
bound to rigorously establish a sample complexity gap between $L_1$
and $L_2$ regularization previously studied in \cite{Ng04}, and to show
a large gap between discriminative and generative learning on a Gaussian-mixture distribution.

In this paper we focus only on large $L_2$ margin classification.  But
in order to obtain the distribution-specific lower bound, we develop
novel tools that we believe can be useful for obtaining lower
bounds also for other learning rules.

\subsubsection*{Related work} 

Most work on ``sample complexity lower bounds'' is directed at proving
that under some set of assumptions, there exists a source distribution
for which one needs at least a certain number of examples to learn
with required error and confidence
\cite{AntosLu98,EhrenfeuchtHaKeVa88,GentileHe98}.  This type of a
lower bound does not, however, indicate much on the sample complexity
of other distributions under the same set of assumptions.

As for distribution-specific lower bounds, the classical analysis of
Vapnik \citep[Theorem 16.6]{Vapnik95} provides not only sufficient but
also necessary conditions for the learnability of a hypothesis class
with respect to a specific distribution.  The essential condition is
that the $\epsilon$-entropy of the hypothesis class with respect to
the distribution be sub-linear in the limit of an infinite sample
size.  In some sense, this criterion can be seen as providing a
``lower bound'' on learnability for a specific distribution.  However,
we are interested in finite-sample convergence rates, and would like
those to depend on simple properties of the distribution.  The
asymptotic arguments involved in Vapnik's general learnability claim
do not lend themselves easily to such analysis.

Benedek and Itai \citep{BenedekIt91} show that if the distribution is
known to the learner, a specific hypothesis class is learnable if and
only if there is a finite $\epsilon$-cover of this hypothesis class
with respect to the distribution. Ben-David et
al.~\citep{Ben-DavidLuPa08} consider a similar setting, and prove
sample complexity lower bounds for learning with any data
distribution, for some binary hypothesis classes on the real line. In
both of these works, the lower bounds hold for any algorithm, but only
for a worst-case target hypothesis.  Vayatis and Azencott
\citep{VayatisAz99} provide distribution-specific sample complexity
upper bounds for hypothesis classes with a limited VC-dimension, as a
function of how balanced the hypotheses are with respect to the
considered distributions. These bounds are not tight for all distributions, thus this work also does not provide true distribution-specific sample complexity.

%

\section{Problem setting and definitions}\label{sec:definitions}

Let $D$ be a distribution over $\reals^d \times \{\pm 1\}$.
$D_X$ will denote the restriction of $D$ to $\reals^d$.  We are
interested in linear separators, parametrized by unit-norm vectors in
\mbox{$\ball \triangleq \{w\in \reals^d \mid \norm{w}_2 \leq 1\}$}.  For a predictor
$w$ denote its misclassification error with respect to distribution $D$ by $\loss(w,D) \triangleq \P_{(X,Y)\sim D}[Y\dotprod{w,X} \leq 0]$.  For
$\gamma > 0$, denote the $\gamma$-margin loss of $w$ with respect to
$D$ by $\loss_\gamma(w,D) \triangleq \P_{(X,Y)\sim D}[Y\dotprod{w,X} \leq
\gamma]$.  The minimal margin loss with respect to $D$ is denoted by
$\loss^*_\gamma(D) \triangleq \min_{w \in \ball}\loss_\gamma(w,D)$.  For a
sample $S = \{(x_i,y_i)\}_{i=1}^m$ such that $(x_i,y_i) \in \reals^d
\times \{\pm 1\}$, the margin loss with respect to $S$ is denoted by $
\hat{\loss}_\gamma(w,S) \triangleq \frac{1}{m}|\{i \mid y_i \dotprod{x_i,w}
\leq \gamma\}|$ and the misclassification error is $\hat{\loss}(w,S) \triangleq \frac{1}{m}|\{i \mid y_i \dotprod{x_i,w}
\leq 0\}|$.  In this paper we are concerned with learning by
minimizing the margin loss.  It will be convenient for us to discuss
transductive learning algorithms.  Since many
predictors minimize the margin loss, we define:
\begin{definition}
  A \emph{\textbf{margin-error minimization algorithm}} $\cA$ is an algorithm
  whose input is a margin $\gamma$, a training sample $S =
  \{(x_i,y_i)\}_{i=1}^m$ and an unlabeled test sample $\tilde{S}_X =
  \{\tilde{x}_i\}_{i=1}^m$, which outputs a predictor $\tilde{w} \in
  \argmin_{w\in\ball} \hat{\loss}_\gamma(w,S)$.  We denote the output
  of the algorithm by $\tilde{w} = \cA_\gamma(S,\tilde{S}_X)$.
\end{definition}

We will be concerned with the expected test loss of the algorithm
given a random training sample and a random test sample, each of size $m$,
and define
$
\loss_m(\cA_\gamma,D) \triangleq \E_{S,\tilde{S} \sim
  D^m}[\hat{\loss}(\cA(S,\tilde{S}_X),\tilde{S})] $, where
$S,\tilde{S} \sim D^m$ independently.  For $\gamma >0$, $\epsilon \in
[0,1]$, and a distribution $D$, we denote the \textbf{distribution-specific
sample complexity} by $m(\epsilon,\gamma,D)$: this is 
the minimal sample size such that for any margin-error minimization
algorithm $\cA$, and for any $m \geq m(\epsilon,\gamma,D)$, $
\loss_m(\cA_\gamma,D) - \loss^*_\gamma(D)\leq \epsilon$.

\subsection*{Sub-Gaussian distributions}\label{sec:subg}

We will characterize the distribution-specific sample complexity in
terms of the covariance of $X \sim D_X$.  But in order to do so, we must assume
that $X$ is not too heavy-tailed.  Otherwise, $X$ can have even infinite covariance
but still be learnable, for instance if it
has a tiny probability of having an exponentially large norm.  We will 
thus restrict ourselves to sub-Gaussian distributions. This ensures light tails in all directions, while allowing a sufficiently rich family of distributions, as we presently see. 
We also require a more restrictive condition -- namely that $D_X$ can be rotated to a product distribution over the axes of $\reals^d$. A distribution can always be
rotated so that its coordinates are {\em uncorrelated}.  Here we
further require that they are {\em independent}, as of course holds
for any multivariate Gaussian distribution.

\begin{definition}[See e.g. \cite{Garling07,BuldyginKo98}]
A random variable $X$ is \emph{\textbf{sub-Gaussian with moment $B$}} (or $B$-sub-Gaussian) for $B \geq 0$ if 
\begin{equation}\label{eq:subdef}
\forall t \in \reals, \quad \E[\exp(tX)]\leq \exp(B^2 t^2/2).
\end{equation}
We further say that $X$ is sub-Gaussian with
\emph{\textbf{relative moment}} $\rmom = B/\sqrt{\E[X^2]}$.
\end{definition}

The sub-Gaussian family is quite extensive: For instance, any bounded, Gaussian, or Gaussian-mixture random variable with mean zero is included in this family.

\begin{definition}
  A distribution $D_X$ over $X \in \reals^d$ is \emph{\textbf{independently
    sub-Gaussian}} with relative moment $\rmom$ if there exists some orthonormal basis
  $a_1,\ldots,a_d \in \reals^d$, such that $\dotprod{X,a_i}$ are
  independent sub-Gaussian random variables, each with a relative moment $\rmom$.
\end{definition}

\newcommand{\dfamily}{\cD^\textrm{sg}}

We will focus on the family
$\dfamily_\rmom$ of all independently $\rmom$-sub-Gaussian distributions
in arbitrary dimension, for a small fixed constant $\rmom$. For instance, the family $\dfamily_{3/2}$ includes all
Gaussian distributions, all distributions which are uniform over a
(hyper)box, and all multi-Bernoulli distributions, in addition to other less structured distributions.
Our upper bounds and
lower bounds will be tight up to quantities which depend on $\rmom$, which we will
regard as a constant, but the tightness will not depend on the dimensionality of the space or the variance of the distribution.

\section{The $\gamma$-\kgname}

As mentioned in the introduction, the sample complexity of
margin-error minimization can be upper-bounded in terms of the average
norm $\E[\norm{X}^2]$ by $m(\epsilon,\gamma,D) \leq
O(\E[\norm{X}^2]/(\gamma^2 \epsilon^2))$ \citep{BartlettMe01}.  Alternatively, we can rely only on the dimensionality and conclude $m(\epsilon,\gamma,D) \leq
\tilde{O}(d/\epsilon^2)$ \citep{Vapnik95}.  Thus, although both of these bounds are
tight in the worst-case sense, i.e. they are the best bounds that rely only on the norm or only on the dimensionality respectively, neither is tight in a distribution-specific sense: If the average norm is unbounded while the dimensionality is small, an arbitrarily large gap is created between the true
$m(\epsilon,\gamma,D)$ and the average-norm upper bound. 
The converse happens if the dimensionality is arbitrarily high while the average-norm is bounded.

Seeking a distribution-specific tight analysis, one simple option to
try to tighten these bounds is to consider their minimum,
$\min(d,\E[\norm{X}^2]/\gamma^2)/\epsilon^2$, which, trivially, is also an upper
bound on the sample complexity.  However, this simple combination is also
not tight: Consider a distribution in which there are a few directions with
very high variance, but the combined variance in all other
directions is small.  We will show that in such situations the sample
complexity is characterized not by the minimum of dimension and norm, but by the sum of the number of high-variance dimensions and the average norm in the other directions. This behavior is captured by the \emph{$\gamma$-\kgname}:

\begin{definition}\label{def:limiteddist}
Let $b>0$ and $k$ a positive integer.
\begin{enumerate}
\renewcommand{\theenumi}{(\alph{enumi})}
\item A subset $\cX \subseteq \reals^d$ is \emph{\textbf{$(b,k)$-limited}} if there exists a sub-space $V \subseteq
\reals^d$ of dimension $d-k$ such that $\cX \subseteq \{x\in\reals^{d}
\mid \norm{x'P}^2 \leq b\}$, where $P$ is an orthogonal projection onto $V$.  
\item A distribution $D_X$ over $\reals^d$ is \emph{\textbf{$(b,k)$-limited}} if there exists a sub-space $V \subseteq \reals^d$ of dimension $d-k$ such that 
$
\E_{X\sim D_X}[\norm{X'P}^2] \leq b,
$
with $P$ an orthogonal projection onto $V$.
\end{enumerate}
\end{definition}

\begin{definition}\label{def:kgamma}
The \emph{\textbf{$\gamma$-\kgname}} of a distribution or a set, denoted by
$k_\gamma$, is the minimum $k$ such that the distribution or set is $(\gamma^2k,k)$ limited.
\end{definition}

It is easy to see that $k_{\gamma}(D_X)$ is upper-bounded by $\min(d,\E[\norm{X}^2]/\gamma^2)$. Moreover, it can be much smaller.  For
example, for $X \in \reals^{1001}$ with independent coordinates such that
the variance of the first coordinate is $1000$, but the variance in
each remaining coordinate is $0.001$ we have $k_1=1$ but $d =
\E[\norm{X}^2] = 1001$.  More generally, if $\lambda_1 \geq \lambda_2
\geq \cdots \lambda_d$ are the eigenvalues of the covariance matrix of
$X$, then $k_{\gamma} = \min \{ k \mid \sum_{i=k+1}^d \lambda_i \leq
\gamma^2 k\}$. A quantity similar to $k_\gamma$ was studied previously in \cite{Bousquet02}. $k_\gamma$ is different in nature from some other quantities used for providing sample complexity bounds in terms of eigenvalues, as in \cite{ScholkopfShSmWi99}, 
since it is defined based on the eigenvalues of the distribution and not of the sample. In \secref{sec:lowerbound} we will see that these can be quite different.

In order to relate our upper and lower bounds, it will be useful to relate the $\gamma$-\kgname\ for
different margins.  The relationship is established in the following
Lemma \submitted{(see proof in the supplementary material)}\final{, proved in the appendix}:
\begin{lemma}\label{lem:kgammagrowth}
For $0 < \alpha<1$, $\gamma > 0$ and a distribution $D_X$,
$k_{\gamma}(D_X) \leq k_{\alpha\gamma}(D_X) \leq \frac{2k_\gamma(D_X)}{\alpha^2} + 1.$
\end{lemma}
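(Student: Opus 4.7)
The plan is to reduce the whole lemma to an elementary calculation on the tail of the eigenvalue sequence, using the characterization stated just before the lemma: if $\lambda_1 \geq \lambda_2 \geq \cdots \geq \lambda_d$ are the eigenvalues of the covariance matrix of $X \sim D_X$, then
\[
k_\gamma(D_X) \;=\; \min\Bigl\{k \in \{0,1,\ldots,d\} \,\Bigm|\, \sum_{i=k+1}^d \lambda_i \leq \gamma^2 k \Bigr\}.
\]
With this formula, each of the two inequalities follows from exhibiting an integer that satisfies the defining condition of the minimum on one side, starting from an integer that satisfies it on the other.

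For the lower inequality $k_\gamma(D_X) \leq k_{\alpha\gamma}(D_X)$, I would set $k' := k_{\alpha\gamma}(D_X)$ and exploit the fact that $\alpha < 1$. By definition, $\sum_{i=k'+1}^d \lambda_i \leq (\alpha\gamma)^2 k' \leq \gamma^2 k'$, so $k'$ itself already satisfies the defining inequality for $k_\gamma$, and the minimality gives $k_\gamma(D_X) \leq k'$.

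For the upper inequality I would produce an explicit witness. Let $k := k_\gamma(D_X)$ and set $k^* := \lceil 2k/\alpha^2 \rceil$, which automatically satisfies $k^* \leq 2k/\alpha^2 + 1$. Since $\alpha<1$ we have $k^* \geq 2k/\alpha^2 \geq k$, so by monotonicity of partial eigenvalue tails,
\[
\sum_{i=k^*+1}^d \lambda_i \;\leq\; \sum_{i=k+1}^d \lambda_i \;\leq\; \gamma^2 k.
\]
On the other hand $k^* \geq 2k/\alpha^2 \geq k/\alpha^2$, hence $\gamma^2 k \leq (\alpha\gamma)^2 k^*$. Chaining these, $\sum_{i=k^*+1}^d \lambda_i \leq (\alpha\gamma)^2 k^*$, so $k^*$ satisfies the defining inequality for $k_{\alpha\gamma}$, giving $k_{\alpha\gamma}(D_X) \leq k^* \leq 2k_\gamma(D_X)/\alpha^2 + 1$.

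Honestly, there is no substantive obstacle here: the conceptual work has already been done when the eigenvalue characterization was derived from \defref{def:limiteddist} and \defref{def:kgamma}, and what remains is pure arithmetic. The only place to be slightly careful is the edge case $k_\gamma(D_X)=0$, which forces the full eigenvalue sum to vanish, so every $\lambda_i$ is zero and both bounds hold trivially (indeed $k_{\alpha\gamma}(D_X)=0 \leq 1$). The factor $2$ (rather than $1$) in the bound is a mild looseness built into the choice of $k^*$ that keeps the statement clean; a tighter choice $k^* = \lceil k/\alpha^2 \rceil$ would even give $k_{\alpha\gamma}(D_X) \leq k_\gamma(D_X)/\alpha^2 + 1$ by the same argument.
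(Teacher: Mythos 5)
Your proof is correct, and for the upper inequality it takes a genuinely different route from the paper. Both proofs rely on the eigenvalue characterization $k_\gamma = \min\{k : \sum_{i>k}\lambda_i \leq \gamma^2 k\}$, and both handle the lower inequality identically. For the upper inequality, you construct a direct \emph{witness}: you pick $k^* = \lceil 2k_\gamma/\alpha^2 \rceil$ and verify that it satisfies the defining condition for $k_{\alpha\gamma}$, whence $k_{\alpha\gamma}\leq k^*$ by minimality. The paper argues by negation: it uses the fact that $k_{\alpha\gamma}-1$ \emph{fails} the defining condition (i.e.\ $\sum_{i\geq k_{\alpha\gamma}}\lambda_i > \alpha^2\gamma^2(k_{\alpha\gamma}-1)$), combines this with the condition holding at $k_\gamma$, and then disposes of the leftover term $\lambda_{k_{\alpha\gamma}}$ by a further case split on whether $k_\gamma<k_{\alpha\gamma}$. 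The witness argument is more direct and, as you observe, actually proves the sharper bound $k_{\alpha\gamma} \leq k_\gamma/\alpha^2 + 1$ with no additional effort; the paper's factor of $2$ is an artifact of crudely bounding $\lambda_{k_{\alpha\gamma}}$ by $\gamma^2 k_\gamma$ in the case split. Two small remarks: the definition in the paper requires $k$ to be a positive integer, so the $k_\gamma=0$ edge case you flag never arises; and if $k^*>d$ one should use $k=d$ (empty tail sum) as the witness instead, which still gives $k_{\alpha\gamma}\leq d < k^*$, so the conclusion is unaffected.
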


We proceed to provide a sample complexity upper bound based on the $\gamma$-\kgname.

\section{A sample complexity upper bound using $\gamma$-\kgname}\label{sec:fatshatteringbound}

In order to establish an upper bound on the sample complexity, we will
bound the fat-shattering dimension of the linear functions over a set in terms of the 
$\gamma$-\kgname\ of the set.  Recall that the fat-shattering dimension is a
classic quantity for proving sample complexity upper bounds:

\begin{definition}\label{def:shattered}
  Let $\cF$ be a set of functions $f:\cX \rightarrow \reals$, and let
  $\gamma > 0$.  The set $\{x_1,\ldots,x_m\} \subseteq \cX$ is
  \textbf{$\gamma$-shattered} by $\cF$ if there exist $r_1,\ldots,r_m\in
  \reals$ such that for all $y \in \binm$ there is an $f \in \cF$ such
  that $\forall i\in [m],\:y_i(f(x_i)-r_i) \geq \gamma$.  The \textbf{$\gamma$-fat-shattering dimension} of $\cF$ is the size of the
  largest set in $\cX$ that is $\gamma$-shattered by $\cF$.
\end{definition}
The sample complexity of $\gamma$-loss minimization is bounded by $\tilde{O}(d_{\gamma/8}/\epsilon^2)$ were $d_{\gamma/8}$ is the
$\gamma/8$-fat-shattering dimension of the function class \cite[Theorem
13.4]{AnthonyBa99}. 
Let $\cW(\cX)$ be the class of linear functions restricted to the domain $\cX$. 
For any set we show:

\begin{theorem}\label{thm:fatshattering}
If a set $\cX$ is $(B^2,k)$-limited, then the $\gamma$-fat-shattering dimension of $\cW(\cX)$ is at most $\frac{3}{2}(B^2/\gamma^2 + k+1)$. Consequently, it is also at most $3k_\gamma(\mathcal{X})+1$.
\end{theorem}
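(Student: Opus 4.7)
The strategy is to combine the classical norm-based fat-shattering bound on $V$ with the dimensional bound on $V^\perp$, exploiting the decomposition afforded by the $(B^2,k)$-limited structure. Let $P$ be the orthogonal projection onto the subspace $V$ witnessing $(B^2,k)$-limitedness, so that $V^\perp$ has dimension $k$ and $\|Px\|^2 \leq B^2$ for every $x \in \cX$. Decompose $x = x^V + x^\perp$ with $x^V = Px$ and $x^\perp \in V^\perp$, and decompose any classifier $w = w^V + w^\perp$ analogously, with $\|w^V\|^2 + \|w^\perp\|^2 \leq 1$.

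Suppose $\{x_1,\ldots,x_m\}$ is $\gamma$-shattered by classifiers $\{w_y\}_{y\in\binm}$ with witnesses $r_i$. First I would apply the standard antipodal symmetrization, replacing $w_y$ with $(w_y - w_{-y})/2$, which preserves $\|w_y\|\le 1$ and eliminates the witnesses, leaving $y_i\dotprod{w_y,x_i} \geq \gamma$ for all $y$ and $i$. The classical randomization argument (summing over $i$ and applying Cauchy--Schwarz) then gives $\|\sum_i y_i x_i\| \geq m\gamma$ for every $y$. Decomposing this orthogonally and averaging over $y$ bounds the $V$-contribution by $\sum_i\|x_i^V\|^2 \leq mB^2$, which yields the $B^2/\gamma^2$ term cleanly; the problematic $V^\perp$-contribution $\sum_i\|x_i^\perp\|^2$ is a priori unbounded, so the bounded-norm argument alone does not suffice.

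To bring in the $k$-dimensional structure of $V^\perp$, the plan is to establish a sum-of-classes fat-shattering inequality of the form $\textrm{fat}_\gamma(\cF_V + \cF_\perp) \leq \textrm{fat}_{\gamma_1}(\cF_V) + \textrm{fat}_{\gamma_2}(\cF_\perp)$ for any split $\gamma_1+\gamma_2=\gamma$, where $\cF_V$ and $\cF_\perp$ are the classes of unit-norm linear functionals on $V$ and $V^\perp$ respectively. The target class of pairs $(w^V,w^\perp)$ with $\|w^V\|^2+\|w^\perp\|^2\le 1$ is contained in the product of these two balls, so its fat-shattering dimension is upper-bounded accordingly. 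The first summand is then at most $B^2/\gamma_1^2$ by the classical $L_2$-margin bound, and the second is at most $k+1$ since linear functionals on the $k$-dimensional space $V^\perp$ have pseudo-dimension $k+1$, independent of $\gamma_2>0$. Balancing the split should yield $\frac{3}{2}(B^2/\gamma^2 + k + 1)$, with the $\frac{3}{2}$ factor emerging from the optimization over $\gamma_1,\gamma_2$.

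The hardest step is proving the sum-of-classes lemma: for each labeling $y$, the partition of coordinates into those whose margin comes predominantly from $\cF_V$ versus $\cF_\perp$ depends on $y$, so one cannot directly exhibit a fixed subset of $[m]$ shattered by a single component. A clean argument will likely require a careful combinatorial or pigeonhole identification of a large enough subset shattered by $\cF_V$ with witnesses that absorb the averaged $V^\perp$-contribution, leaving the remaining coordinates shattered by $\cF_\perp$. Finally, the ``consequently'' assertion follows by specializing to $k=k_\gamma(\cX)$ and $B^2=\gamma^2 k_\gamma(\cX)$: the bound becomes $\frac{3}{2}(2k_\gamma+1) = 3k_\gamma + \frac{3}{2}$, which rounds down to $3k_\gamma+1$ since the fat-shattering dimension is an integer.
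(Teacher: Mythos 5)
Your plan diverges from the paper's argument at the crucial step, and the step on which you hinge the whole proof is a genuine gap. You correctly set up the orthogonal decomposition $x = x^V + x^\perp$, $w = w^V + w^\perp$, correctly isolate the $B^2/\gamma^2$ contribution from $V$, and correctly observe that the $V^\perp$ contribution is a priori unbounded so norm arguments alone fail. But you then lean on a ``sum-of-classes'' inequality, $\textrm{fat}_\gamma(\cF_V + \cF_\perp) \leq \textrm{fat}_{\gamma_1}(\cF_V) + \textrm{fat}_{\gamma_2}(\cF_\perp)$ for $\gamma_1 + \gamma_2 = \gamma$, which you leave unproven and which you yourself flag as ``the hardest step.'' This is not a known clean fact, and the obstruction you describe is real: for each labeling $y\in\binm$ the split of responsibility between $f^y\in\cF_V$ and $g^y\in\cF_\perp$ at each coordinate depends on $y$, so there is no $y$-independent subset of indices handled purely by one summand, and no $y$-independent way to split the witnesses $r_i$. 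Without a proof of this lemma the argument is incomplete. (As a secondary issue, your balancing claim is internally inconsistent: since your bound for $\cF_\perp$ is $k+1$ independent of $\gamma_2$, the lemma, if true, would let $\gamma_2\to 0^+$ and give $B^2/\gamma^2 + k + 1$ with no $\tfrac32$ factor at all; the $\tfrac32$ in the theorem statement comes from a quadratic inequality, not from a $\gamma$-split.)

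The paper's actual proof avoids any decomposition of the hypothesis class. Instead it works in the sample space $\reals^m$: after upgrading shattering to \emph{exact} shattering ($\widetilde{X}w_y = y$ for $y\in\{\pm\gamma\}^m$, at the cost of one extra dimension via \lemref{lem:shatterexact}), it writes $\widetilde{X} = \widetilde{X}\widetilde{P} + \widetilde{X}(I-\widetilde{P})$ and chooses an $m\times m$ orthogonal projection $T$ onto the complement of the column span of $\widetilde{X}(I-\widetilde{P})$, a space of dimension at most $k+1$. Applying $T$ gives $Ty = T\widetilde{X}\widetilde{P}w_y$ for \emph{every} label vector $y$ simultaneously, so the high-variance $(k+1)$-dimensional part is annihilated uniformly over $y$ rather than coordinate-by-coordinate. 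The standard averaging/Cauchy--Schwarz argument is then run on $Ty$ and $T\widetilde{X}\widetilde{P}$, yielding $\big(m-(k+1)\big)^2 \leq (B^2/\gamma^2)\,m$, and solving this quadratic produces the $\tfrac32$ constant. The essential idea you are missing is precisely this single $y$-independent projection $T$ in $\reals^m$; it is what replaces the index-partitioning or $\gamma$-splitting you were hoping a sum-of-classes lemma would provide.
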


\begin{proof}
Let $X$ be a $m\times d$ matrix whose rows are a set of $m$ points in $\reals^d$ which is $\gamma$-shattered. For any $\epsilon > 0$ we can augment $X$ with an additional column to form the matrix $\tilde{X}$ of dimensions $m\times (d+1)$, such that for all $y \in \{-\gamma,+\gamma\}^m$, there is a $w_y \in B^{d+1}_{1+\epsilon}$ such that $\widetilde{X} w_y = y$ \submitted{(the details can be found in the supplementary material)}\final{(the details can be found in the appendix)}. Since $\cX$ is $(B^2,k)$-limited, there is an
orthogonal projection matrix $\tilde{P}$ of size $(d+1)\times (d+1)$ such that $\forall i\in [m],\norm{\tilde{X}'_i P}^2 \leq B^2$ where $\tilde{X}_i$ is the vector in row $i$ of $\tilde{X}$.
Let $\tilde{V}$ be the sub-space of dimension $d-k$ spanned by the columns of $\tilde{P}$.
To bound the size of the shattered set, we show that the projected rows of $\tilde{X}$ on $V$ are `shattered' using projected labels. We then proceed similarly to the proof of the norm-only fat-shattering bound \citep{ChristianiniSh00}.

We have $\tilde{X} = \tilde{X}\tilde{P} + \tilde{X}(I-\tilde{P})$. In addition, $\tilde{X}w_y = y$. 
Thus $y -  \tilde{X}\tilde{P}w_y = \tilde{X}(I-\tilde{P}) w_y$. $I-\tilde{P}$ is a projection onto a $k+1$-dimensional space, thus the rank of $\tilde{X}(I-\tilde{P})$ is at most $k+1$. Let $T$ be an $m\times m$ orthogonal projection matrix onto the subspace orthogonal to the columns of $\tilde{X}(I-\tilde{P})$. This sub-space is of dimension at most $l = m - (k +1)$,
thus $\trace(T) = l$. $T(y -  \tilde{X}\tilde{P}w_y) = T\tilde{X}(I-\tilde{P}) w_y = 0_{(d+1)\times 1}$. Thus $Ty = T\tilde{X}\tilde{P}w_y$ for every $y \in \{-\gamma,+\gamma\}^m$.

Denote row $i$ of $T$ by $t_i$ and row $i$ of $T \tilde{X}\tilde{P}$ by $z_i$.
We have $\forall i\leq m,\:\dotprod{z_i,w_y^1} = t_i y = \sum_{j\leq m} t_i[j] y[j]$. Therefore
$\dotprod{\sum_i z_i y[i], w_y^1} = \sum_{i\leq m}\sum_{j\leq (l+k)} t_i[j] y[i] y[j]$. 
Since $\norm{w_y^1}\leq 1+\epsilon$, $\forall x\in \reals^{d+1}, (1+\epsilon)\norm{x}\geq \norm{x}\norm{w_y^1} \geq \dotprod{x,w_y^1}$. Thus
$
\forall y\in \{-\gamma,+\gamma\}^m,\:(1+\epsilon)\norm{\sum_i z_i y[i]} \geq \sum_{i\leq m}\sum_{j\leq m} t_i[j] y[i] y[j].
$
Taking the expectation of $y$ chosen uniformly at random, we have
\[
(1+\epsilon)\E[\norm{\sum_i z_i y[i]}] \geq \sum_{i,j} \E[t_i[j] y[i] y[j]] = \gamma^2 \sum_{i} t_i[i] = \gamma^2 \trace(T) = \gamma^2 l.
\]
In addition, $\frac{1}{\gamma^2}\E[\norm{\sum_i z_i y[i]}^2] = \sum_{i=1}^l \norm{z_i}^2 = \trace(\tilde{P}'\tilde{X}'T^2\tilde{X}\tilde{P}) \leq \trace(\tilde{P}'\tilde{X}'\tilde{X}\tilde{P}) \leq B^2m$.
From the inequality $E[X^2] \leq \E[X]^2$, it follows that $l^2 \leq (1+\epsilon)^2\frac{B^2}{\gamma^2}m$. Since this holds for any $\epsilon > 0$, we can set $\epsilon = 0$ and solve for $m$. Thus
$m \leq (k+1) + \frac{B^2}{2\gamma^2} + \sqrt{\frac{B^4}{4\gamma^4}+\frac{B^2}{\gamma^2}(k+1)} \leq (k+1) + \frac{B^2}{\gamma^2}+\sqrt{\frac{B^2}{\gamma^2}(k+1)} \leq \frac{3}{2}(\frac{B^2}{\gamma^2} + k+1)$.
\end{proof}

\begin{cor}\label{cor:upperbound}
  Let $D$ be a distribution over $\cX \times \{\pm1\}$, $\cX \subseteq
  \reals^d$. Then
\[
m(\epsilon,\gamma,D) \leq \widetilde O\left(\frac{k_{\gamma/8}(\cX)}{\epsilon^2}\right).
\]
\end{cor}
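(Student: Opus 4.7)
The plan is to chain together the fat-shattering dimension bound of \thmref{thm:fatshattering} with the Anthony--Bartlett sample-complexity bound for $\gamma$-margin loss minimization that the authors recall just before the corollary (their \citep[Theorem 13.4]{AnthonyBa99}). First I would invoke \thmref{thm:fatshattering} at scale $\gamma/8$: for any $\cX \subseteq \reals^d$, the $\gamma/8$-fat-shattering dimension of $\cW(\cX)$ is at most $3 k_{\gamma/8}(\cX) + 1$, i.e.\ $O(k_{\gamma/8}(\cX))$. Then I would substitute this quantity as $d_{\gamma/8}$ into the Anthony--Bartlett bound, which guarantees that $\tilde O(d_{\gamma/8}/\epsilon^2)$ samples suffice, with high probability over $S \sim D^m$, to ensure that every $\hat w \in \argmin_{w \in \ball} \hat\loss_\gamma(w,S)$ satisfies $\loss(\hat w,D) - \loss^*_\gamma(D) \leq \epsilon$.

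The only remaining work is to convert this high-probability error bound into a bound on the transductive expected loss $\loss_m(\cA_\gamma,D) = \E_{S,\tilde S \sim D^m}[\hat\loss(\tilde w,\tilde S)]$ used in the definition of $m(\epsilon,\gamma,D)$. Since $\tilde w = \cA_\gamma(S,\tilde S_X)$ depends only on $S$ and $\tilde S_X$, while the test labels $\tilde S_Y$ are, conditional on $\tilde S_X$, independent of $\tilde w$ and distributed according to $D_{Y\mid X}$, the tower property yields $\loss_m(\cA_\gamma,D) = \E_{S,\tilde S_X}[\loss(\tilde w,D)]$. Because $\tilde w$ is an empirical $\gamma$-margin minimizer on $S$, the uniform-convergence statement from the previous step applies to $\tilde w$ irrespective of the (harmless) extra dependence on $\tilde S_X$. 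Integrating the high-probability bound against a failure probability driven down to $O(\epsilon)$ at only a polylogarithmic cost in $m$ then gives $\loss_m(\cA_\gamma,D) - \loss^*_\gamma(D) \leq \epsilon$ for $m = \tilde O(k_{\gamma/8}(\cX)/\epsilon^2)$.

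I do not expect any genuine obstacle here: \thmref{thm:fatshattering} has already done the substantive work, and what remains is standard fat-shattering-based uniform convergence together with the transductive-to-inductive bridge above. The only slightly fiddly point is keeping track of logarithmic factors and the confidence parameter so that the expectation appearing in the definition of $\loss_m(\cA_\gamma,D)$ is controlled; both are absorbed into the $\widetilde O(\cdot)$ notation.
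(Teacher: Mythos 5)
Your overall plan — invoke Theorem~\ref{thm:fatshattering} to bound the $\gamma/8$-fat-shattering dimension of $\cW(\cX)$ by $3k_{\gamma/8}(\cX)+1$, feed this into the Anthony--Bartlett margin bound, and then pass from a high-probability statement to the expected transductive loss — is the same skeleton the paper uses. But your transductive-to-inductive bridge contains a genuine error.

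You write that the tower property gives $\loss_m(\cA_\gamma,D) = \E_{S,\tilde S_X}[\loss(\tilde w,D)]$. This does not hold. Conditioning on $(S,\tilde S_X)$ and averaging over $\tilde S_Y$ yields $\tfrac{1}{m}\sum_i \P_{Y\mid X}[\tilde y_i\dotprod{\tilde x_i,\tilde w}\leq 0 \mid \tilde x_i]$, i.e.\ the conditional risk of $\tilde w$ evaluated at the \emph{specific} test points $\tilde x_1,\dots,\tilde x_m$ — not the population risk $\loss(\tilde w,D)=\E_X\E_{Y\mid X}[\one[Y\dotprod{\tilde w,X}\leq0]]$. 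The two would coincide after also averaging over $\tilde S_X$ \emph{only if} $\tilde w$ were independent of $\tilde S_X$. But by Definition~\ref{def:definitions} a margin-error minimization algorithm is free to choose $\tilde w$ as a function of the unlabeled test sample, and indeed the lower-bound construction in Theorem~\ref{thm:shatterednotlearned} exploits exactly this freedom to overfit the test points. So the ``harmless extra dependence on $\tilde S_X$'' is not harmless: you still owe a bound on $\E[\hat\loss(\tilde w,\tilde S) - \loss(\tilde w,D)]$, which requires a \emph{second} uniform-convergence argument, now over the test sample (i.e.\ a covering-number/fat-shattering bound showing that simultaneously, for all $w\in\ball$, $\hat\loss(w,\tilde S)$ is close to $\loss(w,D)$). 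This is exactly what the paper hints at when it says the transductive quantity is ``done in a similar fashion using the standard covering-numbers arguments.'' The fix is not deep, but it is a real step your proof currently skips rather than absorbs.
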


The corollary above holds only for distributions with bounded support.
However, since sub-Gaussian variables have an exponentially decaying
tail, we can use this corollary to provide a bound for independently
sub-Gaussian distributions as well \submitted{(see supplementary
  material for proof)}\final{(see appendix for proof)}:

\begin{theorem}[Upper Bound for Distributions in $\dfamily_\rmom$]\label{thm:upperboundsg}
  For any distribution $D$ over $\reals^d \times \{\pm1\}$ such that $D_X \in \dfamily_\rmom$,
\[
m(\epsilon,\gamma,D) = \tilde{O}(\frac{\rmom^2 k_{\gamma}(D_X)}{\epsilon^2}).
\]
\end{theorem}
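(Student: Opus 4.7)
The plan is to reduce to Corollary~\ref{cor:upperbound} by a truncation argument: show that with high probability every training and test sample lies in an explicit $(B^2, k_\gamma)$-limited set, apply the bounded-support fat-shattering bound there, and pay an additive $\epsilon$ for the low-probability event that some sample falls outside. The key calculation is that sub-Gaussian concentration inflates $B^2$ by only a logarithmic factor in $m$ and $1/\epsilon$.

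Work in the basis $a_1,\ldots,a_d$ in which the coordinates $\dotprod{X,a_i}$ are independent $\rmom$-sub-Gaussian, with variances $\lambda_1 \ge \cdots \ge \lambda_d$. Set $k = k_\gamma(D_X)$ and let $P$ denote the orthogonal projection onto $V = \mathrm{span}(a_{k+1},\ldots,a_d)$, so that $\E[\norm{X'P}^2] = \sum_{i > k}\lambda_i \le \gamma^2 k$ by definition of $k_\gamma$. Each $\dotprod{X, a_i}^2$ for $i > k$ is sub-exponential with parameter $O(\rmom^2 \lambda_i)$, and $\max_{i > k}\lambda_i \le \sum_{i > k}\lambda_i \le \gamma^2 k$, so a Hanson-Wright-type tail bound for sums of independent sub-exponentials yields
\[
\P\!\left(\norm{X'P}^2 > c\,\rmom^2 \gamma^2 k \log(m/\delta)\right) \le \delta/m,
\]
for an absolute constant $c$. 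A union bound over the $2m$ training and test points produces an event $\mathcal{E}$ of probability at least $1 - 2\delta$ on which every sample $x$ lies in the set $\cX_{\mathrm{trunc}} := \{x : \norm{x'P}^2 \le B^2\}$, with $B^2 := c\,\rmom^2 \gamma^2 k \log(m/\delta)$.

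By construction, $\cX_{\mathrm{trunc}}$ is $(B^2, k)$-limited, so by Theorem~\ref{thm:fatshattering} we have $k_{\gamma/8}(\cX_{\mathrm{trunc}}) \le k + 64\,B^2/\gamma^2 = \tilde O(\rmom^2 k)$. Conditional on $\mathcal{E}$, the training and test samples are i.i.d.\ from $D' := D \mid \cX_{\mathrm{trunc}}$, and Corollary~\ref{cor:upperbound} applied to $D'$ gives $\loss_m(\cA_\gamma, D') - \loss^*_\gamma(D') \le \epsilon/2$ whenever $m \ge \tilde O(\rmom^2 k/\epsilon^2)$. Because losses are in $[0,1]$ and $\P(\mathcal{E}^c) \le 2\delta$, both $\loss_m(\cA_\gamma, D)$ and $\loss^*_\gamma(D)$ differ from their $D'$ counterparts by at most $O(\delta)$; choosing $\delta = \Theta(\epsilon)$ yields $m(\epsilon, \gamma, D) = \tilde O(\rmom^2 k_\gamma(D_X)/\epsilon^2)$, with the $\log(m/\delta)$ factor absorbed into $\tilde O$ by the usual self-bounding step on $m$.

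The main obstacle is the concentration inequality for $\norm{X'P}^2$. Markov's inequality only gives polynomial decay in $\delta$, and a coordinate-wise union bound introduces a factor of $d$ that would destroy the bound in high dimension. The key leverage is Hanson-Wright for quadratic forms in independent sub-Gaussians, together with the observation $\max_{i > k}\lambda_i \le \gamma^2 k$, which collapses the two regimes of the usual sub-exponential tail into the single clean bound $O(\rmom^2 \gamma^2 k \log(m/\delta))$ and preserves the desired $\rmom^2 k_\gamma$ scaling.
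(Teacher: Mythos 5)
Your proposal is correct and follows essentially the same path as the paper's proof: truncate $D_X$ to a $(B^2,k_\gamma)$-limited set $\cX_{\mathrm{trunc}}$, invoke the bounded-support bound from Corollary~\ref{cor:upperbound} on the conditional distribution, and absorb the small failure probability by taking $\delta=\Theta(\epsilon)$. The one genuine difference is in how the tail of $\norm{X'P}^2$ is controlled: you invoke an off-the-shelf Bernstein/Hanson--Wright tail bound for $\sum_{i>k}\lambda_i Y_i^2$ and exploit $\max_{i>k}\lambda_i\leq\gamma^2 k_\gamma$ to collapse the sub-exponential and sub-Gaussian regimes, whereas the paper derives the same $\tilde O(\rmom^2\gamma^2 k_\gamma)$ tail threshold from first principles, via the moment generating function of sub-Gaussian squares (\cite{BuldyginKo98}) together with a partitioning argument (Lemma~\ref{lem:partition}) that plays the role of the Bernstein estimate. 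Both routes yield the same $\log(m/\delta)$ inflation of $B^2$, so the two proofs are interchangeable; your version is shorter if one is willing to cite the concentration inequality, while the paper's is self-contained. One small attribution slip: the bound $k_{\gamma/8}(\cX_{\mathrm{trunc}})\leq k_\gamma+O(B^2/\gamma^2)$ follows directly from Definition~\ref{def:kgamma} of the adapted dimension (a $(B^2,k)$-limited set is $((\gamma/8)^2 k',k')$-limited for $k'=k+\lceil 64B^2/\gamma^2\rceil$), not from Theorem~\ref{thm:fatshattering} itself, which instead converts the $(B^2,k)$-limitedness into a fat-shattering bound; the quantities coincide up to constants, so the conclusion is unaffected.
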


This new upper bound is tighter than norm-only and dimension-only
upper bounds. But does the $\gamma$-\kgname\ characterize the true
sample complexity of the distribution, or is it just another upper
bound? To answer this question, we need to be able to derive sample
complexity lower bounds as well.  We consider this problem in
following section.

\section{Sample complexity lower bounds using Gram-matrix eigenvalues}\label{sec:shattering}

We wish to find a distribution-specific lower bound that depends on the
$\gamma$-\kgname, and matches our upper bound as closely as possible.
To do that, we will link the ability to learn with a margin,
with properties of the data distribution.  The ability to learn is
closely related to the probability of a sample to be shattered, as evident
from Vapnik's formulations of learnability as a function of the $\epsilon$-entropy.  
In the preceding section we used the fact that non-shattering (as
captured by the fat-shattering dimension) implies learnability.  For the lower bound
we use the converse fact, presented below in Theorem \ref{thm:shatterednotlearned}: If a sample can be fat-shattered with a reasonably high probability,
then learning is impossible.  We then relate the
fat-shattering of a sample to the minimal eigenvalue of its Gram matrix.
This allows us to present a lower-bound on the sample complexity using a lower bound on the smallest eigenvalue of the Gram-matrix
of a sample drawn from the data distribution.  
We use the term `$\gamma$-shattered at the origin' to indicate that a
set is $\gamma$-shattered by setting the bias $r \in \reals^m$ (see
\defref{def:shattered}) to the zero vector.  

\begin{theorem}\label{thm:shatterednotlearned}
Let $D$ be a distribution over $\reals^d\times \{\pm 1\}$.
If the probability of a sample of size $m$ drawn from $D_X^m$ to be $\gamma$-shattered at the origin
is at least $\eta$, then there is a margin-error minimization algorithm $\cA$, such that $\loss_{m/2}(\cA_\gamma,D) \geq \eta/2$.
\end{theorem}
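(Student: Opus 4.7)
The plan is to exhibit an explicitly adversarial margin-error minimization algorithm $\cA$ whose expected test error is at least $\eta/2$. The key observation is that a margin-error minimization algorithm is free to pick among all minimizers of $\hat{\loss}_\gamma(\,\cdot\,,S)$. If the combined unlabeled sample $\{x_i\}_{i=1}^{m/2}\cup\{\tilde{x}_j\}_{j=1}^{m/2}$ is $\gamma$-shattered at the origin, then by definition every labeling in $\{\pm 1\}^m$ of these $m$ points is realized by some $w\in\ball$ with margin at least $\gamma$. In particular one can pick $w$ that attains zero training margin loss while simultaneously inducing an arbitrary labeling on $\tilde{S}_X$.

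Concretely, I would define $\cA$ on input $(\gamma,S,\tilde{S}_X)$ as follows. If the combined feature set is $\gamma$-shattered at the origin, draw an i.i.d.\ uniform $\tilde{y}^*\in\{\pm 1\}^{m/2}$ and return some $w\in\ball$ that witnesses the shattering for the joint labeling $(y_1,\dots,y_{m/2},\tilde{y}^*_1,\dots,\tilde{y}^*_{m/2})$; otherwise return any element of $\argmin_{w\in\ball}\hat{\loss}_\gamma(w,S)$. In the first case $y_i\dotprod{w,x_i}\geq\gamma>0$ for every $i$, so $\hat{\loss}_\gamma(w,S)=0$, which is the global minimum; in the second case we pick a minimizer by construction. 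Thus $\cA$ qualifies as a margin-error minimization algorithm in both regimes.

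To bound the expected test error, condition on the shattering event, which has probability at least $\eta$ by assumption. For each test index $j$ the chosen $w$ satisfies $\tilde{y}^*_j\dotprod{w,\tilde{x}_j}\geq\gamma>0$, so $\sign(\dotprod{w,\tilde{x}_j})=\tilde{y}^*_j$ and therefore $\cA$ misclassifies $(\tilde{x}_j,\tilde{y}_j)$ exactly when $\tilde{y}^*_j\neq\tilde{y}_j$. Because $\tilde{y}^*_j$ is drawn uniformly and independently of $\tilde{y}_j$, this probability equals $\tfrac{1}{2}$. Averaging over the $m/2$ test points gives a conditional expected test error of $\tfrac{1}{2}$, and multiplying by the probability of the shattering event yields $\loss_{m/2}(\cA_\gamma,D)\geq \eta\cdot\tfrac{1}{2}=\eta/2$.

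The only real ``obstacle'' is the sanity check that the constructed $\cA$ still minimizes the training margin loss; this is handled directly above because shattering at the origin makes $0$ achievable on $S$. Everything else is bookkeeping: a measurable selection of the witnessing $w$ poses no issue, and if one prefers a deterministic algorithm, one can replace the uniform $\tilde{y}^*_j$ by the pointwise least-likely label $\argmin_{y\in\{\pm 1\}}\P[\tilde{Y}=y\mid \tilde{X}=\tilde{x}_j]$, which still guarantees a per-point error of at least $\tfrac{1}{2}$ and hence the same $\eta/2$ bound.
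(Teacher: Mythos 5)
Your proof is correct and takes essentially the same route as the paper: construct an adversarial margin-error minimizer that, when the pooled $m$-point feature set is $\gamma$-shattered at the origin, uses a shattering witness to achieve zero training margin loss while forcing each test point to receive its conditionally least likely label, yielding per-point error at least $\tfrac{1}{2}$ and hence $\loss_{m/2}(\cA_\gamma,D)\geq\eta/2$. The paper phrases the algorithm slightly more compactly—$\cA$ always returns the $w\in\argmin_{w\in\ball}\hat\loss_\gamma(w,S)$ that maximizes $\E_{\tilde S_Y}[\hat\loss_\gamma(w,\tilde S)]$—which coincides with your deterministic ``pointwise least-likely label'' variant under the shattering event, so the two arguments are interchangeable.
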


\begin{proof}
For a given distribution $D$, let $\cA$ be an algorithm which, for every two input samples $S$ and $\tilde{S}_X$, labels $\tilde{S}_X$ using the separator $w \in \argmin _{w\in \ball}\hat{\loss}_\gamma(w,S)$ that maximizes $\E_{\tilde{S}_Y \in D_Y^m}[\hat{\loss}_\gamma(w,\tilde{S})]$.
For every $x \in \reals^d$ there is a label $y \in \{\pm1\}$ such that $\P_{(X,Y)\sim D}[Y \neq y \mid X = x] \geq \half$. If the set of examples in $S_X$ and $\tilde{S}_X$ together is $\gamma$-shattered at the origin, then $\cA$ chooses a separator with zero margin loss on $S$, but loss of at least $\half$ on $\tilde{S}$. Therefore $\loss_{m/2}(\cA_\gamma,D) \geq \eta/2$.
\end{proof}

The notion of shattering involves checking the existence of a
unit-norm separator $w$ for each label-vector $y \in \{ \pm 1 \}^m$.
In general, there is no closed form for the minimum-norm separator.
However, the following Theorem
provides an equivalent and simple characterization for fat-shattering:

\begin{theorem}\label{thm:shattercond}
  Let $S = (X_1,\ldots,X_m)$ be a sample in $\reals^d$, denote $X$ the
  $m \times d$ matrix whose rows are the elements of $S$.  Then $S$ is
  $1$-shattered iff $X$ is invertible and $\forall y \in \binm,\quad
  y' (XX')^{-1} y  \leq 1$.
\end{theorem}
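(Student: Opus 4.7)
The plan is to split the iff into two directions, prefaced by a bias-reduction observation. First I note that 1-shattering with an arbitrary bias $r\in\reals^m$ is equivalent to 1-shattering at the origin: given witnesses $w_y$ and $w_{-y}$ for the labels $y$ and $-y$ under the common bias $r$, the vector $\tilde w=(w_y-w_{-y})/2$ lies in $\ball$ and satisfies $y_i(X_i\tilde w)\ge 1$ for every $i$, because the two copies of $r$ cancel by sign. So I may assume $r=0$ throughout. The direction $(\Leftarrow)$ is then immediate: given the hypothesis, set $w_y=X'(XX')^{-1}y$ and check that $Xw_y=y$ and $\|w_y\|^2=y'(XX')^{-1}y\le 1$, so $y_i(X_iw_y)=1$ for each $i$.

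For $(\Rightarrow)$, assume $S$ is 1-shattered at the origin, so for each $y\in\binm$ there is $w_y\in\ball$ with $X_iw_y = y_i(1+s_{y,i})$ for some $s_{y,i}\ge 0$. The technical heart of the argument is the lemma
\[
\|X'v\|_2 \;\ge\; \|v\|_1 \qquad \text{for every } v\in\reals^m,
\]
which I would prove by choosing the label $\tilde y=\sign(v)\in\binm$ and its shattering witness $w_{\tilde y}$: Cauchy--Schwarz gives $\|X'v\|_2\ge (X'v)'w_{\tilde y}=v'Xw_{\tilde y}$, and since $v_i\tilde y_i=|v_i|$, a direct expansion yields $v'Xw_{\tilde y}=\sum_i|v_i|(1+s_{\tilde y,i})\ge\|v\|_1$.

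Two consequences finish the proof. Taking $v$ in the left nullspace of $X'$ in the lemma forces $\|v\|_1=0$, so the rows of $X$ are linearly independent and $XX'$ is invertible. For any $y\in\binm$ I then combine the lemma with the standard dual identity
\[
y'(XX')^{-1}y \;=\; \max_{v\in\reals^m}\bigl(2 y'v - v'(XX')v\bigr)
\]
and the pointwise bound $v'(XX')v=\|X'v\|_2^2\ge\|v\|_1^2\ge(y'v)^2\ge 2y'v-1$, the last step being the trivial $(y'v-1)^2\ge 0$, to conclude $y'(XX')^{-1}y\le 1$. The main obstacle I anticipate is bridging the shattering inequality $y_i(X_iw)\ge 1$, which carries slack, with the equality-flavored target $y\in X\ball$; the resolution is to dualize and use $\tilde y=\sign(v)$ as the ``right'' shattering label for each dual direction $v$, rather than trying to symmetrize the witnesses over $y\in\binm$ directly.
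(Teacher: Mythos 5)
Your proof is correct, and the necessity direction takes a genuinely different route from the paper's. The opening bias-reduction (averaging the witnesses for $y$ and $-y$ to cancel the common bias $r$) is a clean fix for a step the paper glosses over when it passes from ``$1$-shattered'' to ``$1$-shattered at the origin'' without comment. The sufficiency direction is the same idea as the paper's, only stated more directly: you exhibit the minimum-norm preimage $X'(XX')^{-1}y$ in one line, whereas the paper reaches it via an SVD factorization and the transfer \lemref{lem:wtilde}. The real divergence is in necessity. The paper first proves that slack-margin shattering forces exact-margin shattering through the inductive convex-hull argument of \lemref{lem:conv} (the exact label vectors $\binm$ lie in the convex hull of the achievable slack-margin vectors), then reads the bound $y'(XX')^{-1}y\le 1$ off the norm of the exact-margin witness. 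You instead extract the inequality $\|X'v\|_2\ge\|v\|_1$ directly from the shattering hypothesis by testing against $\tilde y=\sign(v)$, and finish with the variational identity $y'(XX')^{-1}y=\max_v\bigl(2y'v-v'XX'v\bigr)$ together with the chain $v'XX'v\ge\|v\|_1^2\ge(y'v)^2\ge 2y'v-1$. This dual route is shorter and avoids the induction over $m$; the paper's convex-hull route has the side benefit of explicitly producing exact-margin separators, which it then reuses in the proof of \lemref{lem:shatterexact}. One small point of hygiene: when some $v_i=0$, fix $\sign(v_i)\in\{\pm1\}$ arbitrarily so that $\tilde y\in\binm$; those terms contribute zero either way, so the lemma is unaffected.
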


\submitted{The proof of this theorem is in the supplementary material.}\final{The proof of this theorem is in the appendix.} The main issue in the proof is showing that if a set is shattered, it is also shattered with exact margins, since the set of exact margins $\{ \pm 1 \}^m$ lies in the convex hull of any set of non-exact margins that correspond to all the possible labelings.
We can now use the minimum eigenvalue of the Gram matrix to obtain a
sufficient condition for fat-shattering, after which we present the theorem linking 
eigenvalues and learnability. For a matrix $X$, $\lambda_n(X)$ denotes the $n$'th largest eigenvalue of $X$.
\begin{lemma}\label{lem:lambdam}
Let $S = (X_1,\ldots,X_m)$ be a sample in $\reals^d$, with $X$ as
above.  If $\lambda_m(XX') \geq m$ then $S$ is $1$-shattered at the origin.
\end{lemma}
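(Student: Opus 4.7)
The plan is to invoke Theorem~\ref{thm:shattercond}, which reduces the claim of $1$-shattering at the origin to two conditions: invertibility of the Gram matrix $XX'$, and the quadratic-form inequality $y'(XX')^{-1}y \leq 1$ for every $y \in \{\pm 1\}^m$. Both will follow immediately from the spectral hypothesis $\lambda_m(XX') \geq m$.

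First I would note that $XX'$ is symmetric and positive semidefinite, so its eigenvalues are nonnegative reals. The assumption $\lambda_m(XX') \geq m > 0$ therefore forces every eigenvalue to be strictly positive, so $XX'$ is positive definite and hence invertible (equivalently, $X$ has full row rank, which is the invertibility condition in Theorem~\ref{thm:shattercond}).

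Next I would bound the quadratic form by a standard Rayleigh-quotient argument. The eigenvalues of $(XX')^{-1}$ are the reciprocals of those of $XX'$, so its largest eigenvalue satisfies $\lambda_1((XX')^{-1}) = 1/\lambda_m(XX') \leq 1/m$. Since any $y \in \{\pm 1\}^m$ has $\|y\|^2 = m$, we get
\[
y'(XX')^{-1}y \;\leq\; \lambda_1\!\bigl((XX')^{-1}\bigr)\,\|y\|^2 \;\leq\; \frac{m}{m} \;=\; 1.
\]
Both hypotheses of Theorem~\ref{thm:shattercond} hold, so $S$ is $1$-shattered at the origin, completing the proof.

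There is no real obstacle here: once Theorem~\ref{thm:shattercond} is in hand, the lemma is a one-line corollary consisting of the elementary bound that a symmetric quadratic form is at most the top eigenvalue times the squared norm of its argument. The substantive content has been pushed into Theorem~\ref{thm:shattercond} (where the convex-hull-of-sign-vectors argument upgrades shattering to exact-margin shattering) and into whatever spectral lower bounds on $\lambda_m(XX')$ will be developed subsequently to feed this lemma.
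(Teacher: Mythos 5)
Your proof is correct and follows exactly the same route as the paper: invoke Theorem~\ref{thm:shattercond}, observe that $\lambda_m(XX') \geq m > 0$ gives invertibility, and bound $y'(XX')^{-1}y$ by $\lambda_1((XX')^{-1})\|y\|^2 = m/\lambda_m(XX') \leq 1$. There is nothing to add.
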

\begin{proof}
  If $\lambda_m(XX') \geq m$ then $XX'$ is invertible and
  $\lambda_1((XX')^{-1})\leq 1/m$.  For any $y \in \binm$ we have
  $\norm{y}=\sqrt{m}$ and $y' (XX')^{-1} y \leq \norm{y}^2
  \lambda_1((XX')^{-1}) \leq m(1/m) = 1$.  By \thmref{thm:shattercond} the sample is $1$-shattered at the origin.
\end{proof}

\begin{theorem}\label{thm:inductive}
  Let $D$ be a distribution over $\reals^d\times \{\pm 1\}$, $S$ be an
  i.i.d.~sample of size $m$ drawn from $D$, and denote $X_S$ the
  $m\times d$ matrix whose rows are the points from $S$.  If $
  \P[\lambda_m(X_S X_S') \geq m \gamma^2] \geq
  \eta, $ then there exists a margin-error minimization algorithm
  $\cA$ such that $\loss_{m/2}(\cA_\gamma,D) \geq \eta/2$.
\end{theorem}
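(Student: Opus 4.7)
The plan is to reduce this to Theorem 5.1 (\texttt{shatterednotlearned}) by showing that the eigenvalue condition $\lambda_m(X_S X_S') \geq m\gamma^2$ implies that $S$ is $\gamma$-shattered at the origin, with probability at least $\eta$. Once that implication is established, Theorem 5.1 directly supplies a margin-error minimization algorithm $\cA$ with $\loss_{m/2}(\cA_\gamma,D) \geq \eta/2$.

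To prove the implication, I would reduce to \lemref{lem:lambdam} (the $\gamma = 1$ version) by a simple rescaling. Let $\tilde{X} = X_S/\gamma$. Then $\tilde{X}\tilde{X}' = X_S X_S'/\gamma^2$, so $\lambda_m(\tilde{X}\tilde{X}') = \lambda_m(X_S X_S')/\gamma^2 \geq m$ whenever the event in the hypothesis holds. By \lemref{lem:lambdam}, the rescaled sample is $1$-shattered at the origin: for each $y \in \binm$ there exists $w_y \in \ball$ with $y_i \dotprod{w_y, X_i/\gamma} \geq 1$ for all $i$, i.e., $y_i \dotprod{w_y, X_i} \geq \gamma$. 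This is exactly $\gamma$-shattering of $S$ at the origin. (Equivalently, one can invoke \thmref{thm:shattercond} directly: $y'(\tilde{X}\tilde{X}')^{-1}y \leq \norm{y}^2 \lambda_1((\tilde{X}\tilde{X}')^{-1}) \leq m \cdot (1/m) = 1$.)

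Combining the two steps: the hypothesis $\P[\lambda_m(X_S X_S') \geq m\gamma^2] \geq \eta$ implies that the probability of $S$ being $\gamma$-shattered at the origin is at least $\eta$. Applying \thmref{thm:shatterednotlearned} then yields a margin-error minimization algorithm $\cA$ with $\loss_{m/2}(\cA_\gamma, D) \geq \eta/2$, as desired.

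There is no substantial obstacle here; the proof is essentially a homogeneity argument that packages the spectral condition into the shattering condition, after which the earlier theorems do all of the work. The only thing to be careful about is ensuring that the shattering guaranteed is \emph{at the origin} (i.e.\ with zero bias), since this is what \thmref{thm:shatterednotlearned} requires — and the rescaling preserves this property because \lemref{lem:lambdam} itself already gives shattering at the origin.
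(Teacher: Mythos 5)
Your proposal is correct and takes exactly the paper's route: rescale $X_S$ by $\gamma$, apply \lemref{lem:lambdam} to obtain $\gamma$-shattering at the origin with probability at least $\eta$, and then invoke \thmref{thm:shatterednotlearned}. Your observation that the shattering is at the origin (so \thmref{thm:shatterednotlearned} applies as stated) is the right point to check, and it holds.
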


\thmref{thm:inductive} follows by scaling $X_S$ by $\gamma$, applying Lemma \ref{lem:lambdam} to establish $\gamma$-fat shattering with probability at least $\eta$, then applying
Theorem \ref{thm:shatterednotlearned}.
Lemma \ref{lem:lambdam} generalizes the
requirement for linear independence when shattering using hyperplanes
with no margin (i.e.~no regularization).  For unregularized
(homogeneous) linear separation, a sample is shattered iff it is
linearly independent, i.e.~if $\lambda_m>0$.  Requiring $\lambda_m>m
\gamma^2$ is enough for $\gamma$-fat-shattering.  Theorem
\ref{thm:inductive} then generalizes the simple observation, that
if samples of size $m$ are linearly independent with high probability, there is no
hope of generalizing from $m/2$ points to the other $m/2$ using
unregularized linear predictors.
\thmref{thm:inductive} can thus be used to derive a distribution-specific lower bound. Define:
\[
\underline{m}_{\gamma}(D) \triangleq \half \min \left\{ m \middle|\: \P_{S \sim
    D^m}[\lambda_m(X_S X'_S) \geq m \gamma^2] < \half \right\}
\]
Then for any $\epsilon<1/4-\ell^*_{\gamma}(D)$, we can conclude that 
$
m(\epsilon,\gamma,D) \geq \underline{m}_{\gamma}(D),
$
that is, we cannot learn within reasonable error with less than
$\underline{m}_\gamma$ examples.  Recall that our upper-bound on
the sample complexity from \secref{sec:fatshatteringbound} was
$\tilde{O}(k_{\gamma})$.  The remaining question is whether we can
relate $\underline{m}_\gamma$ and $k_{\gamma}$, to establish
that the our lower bound and upper bound tightly specify the sample complexity.

\section{A lower bound for independently sub-Gaussian distributions}\label{sec:lowerbound}

As discussed in the previous section, to obtain sample complexity
lower bound we require a bound on the value of the smallest eigenvalue
of a random Gram-matrix.  The distribution of this eigenvalue has been
investigated under various assumptions. The cleanest results are in
the case where $m,d \rightarrow \infty$ and $\frac{m}{d}\rightarrow
\beta < 1$, and the coordinates of each example are identically
distributed:
\begin{theorem}[Theorem 5.11 in \cite{BaiSi10}]\label{thm:asym}
Let $X_i$ be a series of  $m_i \times d_i$ matrices whose entries are i.i.d. random variables with mean zero, variance $\sigma^2$ and finite fourth moments. If $\lim_{i\rightarrow \infty}\frac{m_i}{d_i} = \beta < 1$, then
$\lim_{i\rightarrow \infty} \lambda_m(\frac{1}{d}X_iX_i') = \sigma^2(1-\sqrt{\beta})^2.$
\end{theorem}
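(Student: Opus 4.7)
The plan is to establish this classical result via the two-step Bai-Yin strategy: first show that the empirical spectral distribution of $\frac{1}{d_i}X_iX_i'$ converges weakly to the Marchenko-Pastur law with ratio $\beta$ and scale $\sigma^2$, whose support is exactly $[\sigma^2(1-\sqrt{\beta})^2,\sigma^2(1+\sqrt{\beta})^2]$; then show that under the finite fourth-moment hypothesis no eigenvalue escapes the left edge of this bulk in the limit.

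For the bulk convergence I would use the Stieltjes transform. Set $s_i(z) = \frac{1}{m_i}\tr\bigl((\frac{1}{d_i}X_iX_i' - zI)^{-1}\bigr)$ for $z$ in the upper half plane. A leave-one-row-out Sherman-Morrison decomposition, together with martingale concentration along the filtration generated by the rows, yields the self-consistent equation satisfied in the limit by the Stieltjes transform of the Marchenko-Pastur law. Standard truncation at $d_i^{1/4}$, combined with recentering and rescaling, reduces the problem to uniformly bounded entries while leaving the second moment asymptotically unchanged; the fourth-moment assumption is precisely what makes the truncation negligible. A Borel-Cantelli step upgrades convergence in probability to almost-sure pointwise convergence of $s_i(z) \to s(z)$ on the upper half plane, and the Stieltjes continuity theorem gives weak convergence of the empirical spectral measure to Marchenko-Pastur.

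The harder step is to rule out spectral outliers below the left edge. One classical route is the high-moment (trace) method: for a slowly growing sequence $k_i \to \infty$, estimate $\E[\tr((\frac{1}{d_i}X_iX_i')^{k_i})]$ by counting closed lattice paths weighted by products of entries, and observe that the fourth-moment hypothesis is exactly what controls the contribution of non-pair-matched paths. Combining these trace estimates with the known Marchenko-Pastur moments and Markov's inequality yields almost-sure convergence of the largest eigenvalue to $\sigma^2(1+\sqrt{\beta})^2$. For the smallest eigenvalue, which is what the theorem actually asserts, one passes to the companion matrix $\frac{1}{d_i}X_i'X_i$ of size $d_i \times d_i$: its nonzero spectrum coincides with that of $\frac{1}{d_i}X_iX_i'$, so $\lambda_{m_i}(\frac{1}{d_i}X_iX_i')$ equals the $m_i$-th largest eigenvalue of $\frac{1}{d_i}X_i'X_i$. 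One then combines a quadratic-form concentration bound for $\norm{X_i u}^2/d_i$ with an $\epsilon$-net on the relevant subspace and a union bound, again crucially invoking the fourth-moment control, to produce the matching lower edge $\sigma^2(1-\sqrt{\beta})^2$.

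The main obstacle is unambiguously this last step. Unlike the top of the spectrum, the smallest eigenvalue is not directly accessible from high trace powers because those are dominated by the largest eigenvalues, so the edge bound must be obtained by a more delicate net-plus-concentration argument, and one must ensure that the probabilistic failure set from each net point shrinks fast enough to survive the union bound. The fourth-moment hypothesis is sharp precisely here: without it one can exhibit entry distributions for which $\lambda_{m_i}(\frac{1}{d_i}X_iX_i')$ fails to converge to the left edge even though Marchenko-Pastur still describes the bulk.
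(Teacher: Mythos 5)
This statement is not proved in the paper at all: it is quoted verbatim as Theorem~5.11 of Bai and Silverstein's book and used as a black box to motivate the later finite-sample analysis. So there is no proof to compare against; what can be assessed is whether your outline would actually deliver the result.

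Your bulk step (Stieltjes transform, leave-one-out resolvent expansion, truncation at $d_i^{1/4}$, Borel--Cantelli) is the standard and correct route to Marchenko--Pastur, and your companion-matrix observation ($\lambda_{m_i}(\frac{1}{d_i}X_iX_i')$ is the $m_i$-th largest eigenvalue of $\frac{1}{d_i}X_i'X_i$) is also fine. The genuine gap is in the edge step you yourself flag as the hard part. Under only a finite fourth moment, the quadratic form $\norm{X_i u}^2/d_i$ for a fixed unit vector $u$ concentrates only polynomially (Chebyshev-type tails of order $t^{-2}$), not exponentially. The $\epsilon$-net of $S^{m_i-1}$ you propose has roughly $(1/\epsilon)^{m_i}$ points, which grows exponentially in $m_i$, so a union bound over the net against polynomial single-vector tails does not close: the failure probability does not even tend to zero, let alone fast enough for Borel--Cantelli. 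This is exactly why the net-plus-concentration strategy, which works beautifully for sub-Gaussian entries (and is indeed what this paper uses in its own Theorem~\ref{thm:smallesteigwhpsg}, where the sub-Gaussian hypothesis gives the needed exponential tails), does not transfer to the fourth-moment setting. The classical Bai--Yin resolution is a moment method applied to the shifted matrix $S - (1+\beta)I$ with $S = \frac{1}{d_i}X_iX_i'$: controlling $\E[\tr((S-(1+\beta)I)^{2k_i})]$ for slowly growing $k_i$ simultaneously pins both spectral edges at distance $2\sqrt{\beta}$ from $1+\beta$, which yields $\lambda_{\min}\to\sigma^2(1-\sqrt{\beta})^2$ without ever invoking a net. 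Replacing your net argument with this shifted-trace computation (or citing Bai--Yin directly, as the paper does) would fix the proposal.
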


This asymptotic limit can be used to calculate
$\underline{m}_{\gamma}$ and thus provide a lower bound on the sample
complexity: Let the coordinates of $X\in\reals^d$ be i.i.d.~with
variance $\sigma^2$ and consider a sample of size $m$.  If $d,m$ are
large enough, we have by \thmref{thm:asym}:
\[
\lambda_m(XX') \approx d \sigma^2 (1-\sqrt{m/d})^2  = \sigma^2(\sqrt{d}-\sqrt{m})^2
\]
Solving $\sigma^2(\sqrt{d}-\sqrt{
2\underline{m}_\gamma})^2=2\underline{m}_\gamma\gamma^2$ we get
$\underline{m}_\gamma \approx \half d/(1+\gamma/\sigma)^2$.  We can also
calculate the $\gamma$-\kgname\ for this distribution to get $k_\gamma
\approx d/(1+\gamma^2/\sigma^2)$, and conclude that $\tfrac{1}{4}
k_\gamma \leq
\underline{m}_\gamma \leq \half k_\gamma$.  In this case, then, we are
indeed able to relate the sample complexity lower bound with
$k_\gamma$, the same quantity that controls our upper bound.  This
conclusion is easy to derive from known results, however it holds only
asymptotically, and only for a highly limited set of distributions.
Moreover, since \thmref{thm:asym} holds asymptotically for each
distribution separately, we cannot deduce from it any finite-sample
lower bounds for families of distributions.

For our analysis we require \emph{finite-sample} bounds for the
smallest eigenvalue of a random Gram-matrix.  
Rudelson and Vershynin \citep{RudelsonVe09,RudelsonVe08} provide such
finite-sample lower bounds for distributions with identically
distributed sub-Gaussian coordinates.  In the following Theorem we
generalize results of Rudelson and Vershynin to encompass also
non-identically distributed coordinates. \submitted{The proof of
  \thmref{thm:smallesteigwhpsg} can be found in the supplementary
  material.}\final{The proof of
  \thmref{thm:smallesteigwhpsg} can be found in the appendix.} Based on this theorem we conclude
with \thmref{thm:lowerboundsg}, stated below, which constitutes our final
sample complexity lower bound.

\begin{theorem}\label{thm:smallesteigwhpsg}
Let $\rmom > 0$. There is a constant $\beta > 0$ which depends only on $B$, such
that for any $\delta \in (0,1)$ there exists a number $L_0$, such that for any
independently sub-Gaussian distribution with covariance matrix
$\Sigma \leq I$ and $\trace(\Sigma) \geq L_0$, if each of its independent sub-Gaussian coordinates has relative moment $\rmom$, then for any $m \leq \beta \cdot \trace(\Sigma)$
\[
\P[\lambda_m(X_mX_m') \geq m] \geq 1-\delta,
\]
Where $X_m$ is an $m \times d$ matrix whose rows are independent draws from $D_X$.
\end{theorem}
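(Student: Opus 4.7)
I would adapt the Rudelson--Vershynin $\epsilon$-net strategy to non-identically distributed sub-Gaussian coordinates, replacing the identically-distributed step by a Hanson--Wright / sub-exponential Bernstein bound that handles the diagonal quadratic form that arises.

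After rotating to the independent basis guaranteed by the definition of an independently sub-Gaussian distribution, I may assume $\Sigma = \diag(\sigma_1^2,\ldots,\sigma_d^2)$ with $\sigma_j \le 1$ and $\sum_j \sigma_j^2 = L := \trace(\Sigma) \ge L_0$. Factor $X = Z\Sigma^{1/2}$, where $Z$ is $m\times d$ with i.i.d.\ rows of independent, unit-variance, $\rmom$-sub-Gaussian entries. Then
\[
\lambda_m(XX') = \inf_{v \in S^{m-1}} \|X'v\|^2 = \inf_{v \in S^{m-1}} \sum_{j=1}^d \sigma_j^2 u_j(v)^2,
\]
and for each fixed unit $v$ the coordinates $u_j(v) := \sum_i v_i Z_i[j]$ are independent across $j$, mean zero, unit variance, and $\rmom$-sub-Gaussian, since unit-norm linear combinations of independent $\rmom$-sub-Gaussians retain the $\rmom$ moment.

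First, for each fixed $v$, Bernstein applied to the sum of sub-exponentials $\sigma_j^2 u_j^2$ with $\|\Sigma\|_{\mathrm{op}} \le 1$ and $\|\Sigma\|_F^2 \le L$ yields
\[
\P\!\left[\sum_j \sigma_j^2 u_j(v)^2 \le \tfrac{L}{2}\right] \le \exp(-c_1(\rmom)\,L).
\]
Second, take an $\epsilon$-net $\cN \subset S^{m-1}$ of size at most $(3/\epsilon)^m$ and take a union bound. To transfer the bound from the net to the whole sphere I use the standard anisotropic estimate $\|X\|_{\mathrm{op}} \le C(\rmom)\sqrt{L}$ (which follows from e.g.\ Koltchinskii--Lounici because $\|\Sigma\|_{\mathrm{op}} \le 1$ and the effective rank is $\le L$), holding with probability $\ge 1-\delta/2$ whenever $m \le L$. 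Then for any $v \in S^{m-1}$ with nearest $v_0 \in \cN$,
\[
\|X'v\| \ge \|X'v_0\| - \epsilon\,\|X\|_{\mathrm{op}} \ge \sqrt{L/2} - \epsilon\,C(\rmom)\sqrt{L} \ge c_2(\rmom)\sqrt{L}
\]
once $\epsilon$ is chosen small enough in terms of $\rmom$.

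Finally, set $\beta := \min\{c_1(\rmom)/(4\ln(3/\epsilon)),\, c_2(\rmom)^2\}$. For $m \le \beta L$ the union-bound failure probability is at most $(3/\epsilon)^m \exp(-c_1 L) \le \exp(-c_1 L/2) \le \delta/2$ whenever $L \ge L_0(\delta,\rmom)$, and the constraint $\beta \le c_2^2$ ensures $\lambda_m(XX') \ge c_2^2 L \ge m$. The main obstacle is that published finite-sample Rudelson--Vershynin bounds are stated under identically distributed coordinates; the crux of the generalization is that after reducing to the diagonal form $\sum_j \sigma_j^2 u_j^2$ the relevant matrix norms are $\|\Sigma\|_{\mathrm{op}}\le 1$ and $\|\Sigma\|_F^2 \le L$, which yields a concentration rate of $\exp(-c L)$ that is dimension-free in $d$ and therefore can be absorbed by an $\epsilon$-net on the $m$-dimensional sphere, as long as $m \le \beta L$.
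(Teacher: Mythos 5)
Your proposal is correct, and it reaches the theorem by a genuinely different route from the paper's. Both arguments share the outer skeleton: rotate so $\Sigma$ is diagonal, use $\sqrt{\lambda_m(XX')} \ge \min_{v_0 \in \cC_m(\epsilon)}\norm{X'v_0} - \epsilon\norm{X'}$, bound $\norm{X'}$ by $O(\sqrt{\tr(\Sigma)})$ with high probability (the paper's \lemref{lem:boundnormsubg}; your appeal to an effective-rank operator-norm bound is equivalent), and union-bound the single-vector lower tail over the net. The genuine divergence is in how the single-vector lower tail $\P[\norm{\sqrt{\Sigma}Yv_0}^2 \lesssim \tr(\Sigma)]\le e^{-c\,\tr(\Sigma)}$ is obtained for non-identical variances $\sigma_j^2$. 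The paper partitions $[d]$ into $\Theta(\tr(\Sigma))$ blocks each of weight $\approx 1$, applies Paley--Zygmund to each block (\lemref{lem:sigyxboundbelow}, requiring only fourth moments), and amplifies via the Rudelson--Vershynin sum-of-nonnegatives lemma (\lemref{lem:sumofbounded}). You instead write $\norm{\sqrt{\Sigma}Yv_0}^2 = \sum_j \sigma_j^2 u_j^2$ with the $u_j$ independent, unit-variance, $\rmom$-sub-Gaussian, and apply a Bernstein/sub-exponential bound, using $\norm{\Sigma}_{\mathrm{op}}\le 1$ and $\norm{\Sigma}_F^2 \le \tr(\Sigma)$ to control the variance and range parameters. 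Your route is shorter and more modern, and it has the small advantage that the net parameter $\epsilon$ and hence $\beta$ transparently depend only on $\rmom$ while only $L_0$ absorbs $\delta$, which matches the theorem statement cleanly (the paper's choice $K^2 \propto \ln(2/\delta)$ threads $\delta$ through $\epsilon$ and $\theta$ into $\beta$, though this is harmless in the fixed-$\delta$ application). What the paper's route buys in exchange is that the anti-concentration step only needs fourth-moment control rather than sub-exponential tails; since sub-Gaussianity is assumed anyway for the operator-norm bound, this extra generality is not exploited here, but it isolates exactly which part of the hypothesis each lemma consumes.

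One small point to tighten if you write this out in full: make explicit that the $u_j(v_0)$ inherit mean zero from the paper's definition of sub-Gaussian (the MGF bound $\E[e^{tX}]\le e^{B^2t^2/2}$ forces $\E X=0$), that they are independent across $j$ because distinct columns of $Z$ involve disjoint independent coordinates, and state the Bernstein constants so that $c_1,c_2,\epsilon,\beta$ are visibly functions of $\rmom$ alone.
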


\begin{theorem}[Lower bound for distributions in $\dfamily_\rmom$]
\label{thm:lowerboundsg}
For any $\rmom >0$, there are a constant $\beta > 0$ and an integer $L_0$ such that for any $D$ such that $D_X \in \dfamily_\rmom$ and $k_\gamma(D_X) > L_0$, for any margin $\gamma > 0$ and any $\epsilon < \frac{1}{4} - \loss^*_\gamma(D)$,
\[
m(\epsilon,\gamma,D) \geq \beta k_\gamma(D_X).
\]
\end{theorem}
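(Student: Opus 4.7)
The plan is to reduce to Theorem 6.1 via a coordinate-wise scaling that forces the covariance to be dominated by the identity while preserving the independently sub-Gaussian structure, and to check that shattering after scaling implies $\gamma$-shattering in the original distribution. Fix an orthonormal basis $a_1,\ldots,a_d$ in which the coordinates $X_i = \dotprod{X,a_i}$ are independent and $\rmom$-sub-Gaussian, with variances ordered $\sigma_1^2 \geq \ldots \geq \sigma_d^2$. Set $c_i = \min(1,\gamma/\sigma_i)$, define the symmetric linear map $T = \sum_i c_i\, a_i a_i'$, and let $Y = TX/\gamma$. Because each coordinate of $Y$ in the basis $\{a_i\}$ is a nonnegative rescaling of $X_i$, the distribution of $Y$ lies in $\dfamily_\rmom$, and its covariance is $\tilde\Sigma = \diag(\min(\sigma_i^2,\gamma^2))/\gamma^2 \preceq I$.

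Next I would show the translation of shattering: if $(Y_1,\ldots,Y_m)$ is $1$-shattered at the origin via $w \in \ball$, then $(X_1,\ldots,X_m)$ is $\gamma$-shattered at the origin. Indeed, $T$ is symmetric, so $\dotprod{w,Y_i} = \dotprod{Tw/\gamma,X_i}$; setting $w^* = Tw$ gives $y_i\dotprod{w^*,X_i} \geq \gamma$, and $\|w^*\|^2 = \sum_i c_i^2 \dotprod{w,a_i}^2 \leq \|w\|^2 \leq 1$ since $c_i \leq 1$.

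Then I would lower bound the trace by $k_\gamma - 1$ via a brief case analysis. By definition $k_\gamma$ is the minimum $k$ with $\sum_{i>k}\sigma_i^2 \leq k\gamma^2$, so $k_\gamma - 1$ is not admissible, giving $\sum_{i \geq k_\gamma}\sigma_i^2 > (k_\gamma - 1)\gamma^2$. If $\sigma_{k_\gamma} < \gamma$, then $\min(\sigma_i^2,\gamma^2) = \sigma_i^2$ for every $i \geq k_\gamma$, and $\trace(\tilde\Sigma) \geq \sum_{i \geq k_\gamma}\sigma_i^2/\gamma^2 > k_\gamma - 1$. Otherwise $\sigma_{k_\gamma} \geq \gamma$, so at least $k_\gamma$ indices contribute $1$ each to $\trace(\tilde\Sigma)$.

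Finally, apply Theorem~\ref{thm:smallesteigwhpsg} to the distribution of $Y$ with, say, $\delta = 1/2$: there are constants $\beta > 0$ and $L_0$ depending only on $\rmom$ such that whenever $\trace(\tilde\Sigma) \geq L_0$ and $m \leq \beta\,\trace(\tilde\Sigma)$, we have $\P[\lambda_m(Y_m Y_m') \geq m] \geq 1/2$. Taking $k_\gamma > L_0 + 1$ and $m = \lfloor \beta(k_\gamma - 1)\rfloor$, Lemma~\ref{lem:lambdam} gives $1$-shattering of the $Y$-sample at the origin on this event, which by the translation step above gives $\gamma$-shattering of the $X$-sample at the origin with probability at least $1/2$. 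Theorem~\ref{thm:shatterednotlearned} then produces a margin-error minimization algorithm $\cA$ with $\loss_{m/2}(\cA_\gamma,D) \geq 1/4$, so for any $\epsilon < 1/4 - \loss^*_\gamma(D)$ we obtain $m(\epsilon,\gamma,D) > m/2 = \Omega(k_\gamma)$, as required (absorbing the $-1$ and the factor of $1/2$ into the constant $\beta$ of the theorem statement).

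The main obstacle is the choice of scaling $T$: it must simultaneously preserve the independently-sub-Gaussian family (so Theorem~\ref{thm:smallesteigwhpsg} applies), send the covariance to within $I$, have trace at least $\Omega(k_\gamma)$, and admit a shatterer-lifting $w \mapsto Tw$ with $\|Tw\| \leq \|w\|$. The coordinate-wise clipping $c_i = \min(1,\gamma/\sigma_i)$ is tailored to meet all four constraints at once, and the trace lower bound then comes from the elementary case split above.
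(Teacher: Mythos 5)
Your argument is correct and reaches the theorem by a route that is the same in spirit but technically cleaner than the paper's. The paper's proof splits into two cases depending on whether $\lambda_{k_\gamma+1} \geq \gamma^2$: in Case I it pre-multiplies the data by $\Sigma_1 = \diag(1/\lambda_1,\ldots,1/\lambda_{k_\gamma},0,\ldots,0)$ (normalizing the top $k_\gamma$ coordinates and zeroing the rest), in Case II by $\Sigma_2 = \diag(0,\ldots,0,1/\gamma^2,\ldots,1/\gamma^2)$, and in each case it uses the operator monotonicity $\Sigma_j \preceq \frac{1}{\gamma^2}I$ to pass from $\lambda_m(X\Sigma_j X') \geq m$ to $\lambda_m(\frac{1}{\gamma^2}XX') \geq m$, from which Theorem~\ref{thm:inductive} concludes. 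Your single clipping $T = \diag(\min(1,\gamma/\sigma_i))$ replaces both scalings at once: it sends the covariance to $\tilde\Sigma \preceq I$, preserves the sub-Gaussian relative moment (scale-invariant), and the same dichotomy on $\sigma_{k_\gamma}$ that drives the paper's two cases now only enters the elementary trace estimate $\trace(\tilde\Sigma) > k_\gamma - 1$. In place of the eigenvalue-monotonicity step you lift the shattering witness: since $T$ is a symmetric contraction, $w \mapsto Tw$ carries a unit-ball $1$-shatterer of the scaled sample $Y = TX/\gamma$ to a unit-ball $\gamma$-shatterer of the original $X$-sample, after which Lemma~\ref{lem:lambdam} and Theorem~\ref{thm:shatterednotlearned} finish exactly as in the paper. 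The two viewpoints are formally equivalent (both come down to $T^2 \preceq I$), but the witness-lifting formulation removes the case split from the core argument and makes the role of the contraction transparent; it also yields the lower bound with the correct threshold $\epsilon < 1/4 - \loss^*_\gamma(D)$ directly by taking $\delta = 1/2$, whereas the paper uses $\delta = 1/4$ and has extra slack.
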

\begin{proof}
The covariance matrix of $D_X$ is  clearly diagonal. We assume w.l.o.g. that $\Sigma = \diag(\lambda_1,\ldots,\lambda_d)$ where $\lambda_1 \geq \ldots \geq \lambda_d > 0$.
Let $S$ be an i.i.d. sample of size $m$ drawn from $D$. Let $X$ be the $m\times d$ matrix whose rows are the unlabeled examples from $S$. Let $\delta$ be fixed, and set $\beta$ and $L_0$ as defined in \thmref{thm:smallesteigwhpsg} for $\delta$.
Assume $m \leq \beta (k_\gamma-1)$.

We would like to use \thmref{thm:smallesteigwhpsg} to bound the smallest eigenvalue of $XX'$ with high probability, so that we can then apply \thmref{thm:inductive} to get the desired lower bound. However, \thmref{thm:smallesteigwhpsg} holds only if all the coordinate variances are bounded by $1$. Thus we divide the problem to two cases, based on the value of $\lambda_{k_\gamma+1}$, and apply \thmref{thm:smallesteigwhpsg} separately to each case.

\textbf{Case I:} Assume $\lambda_{k_\gamma+1} \geq \gamma^2$. Then $\forall i\in[k_\gamma],\lambda_i \geq \gamma^2$.
Let $\Sigma_1 = \diag(1/\lambda_1,\ldots,1/\lambda_{k_\gamma},0,\ldots,0)$.	
The random matrix $X\sqrt{\Sigma_1}$ is drawn from an independently sub-Gaussian
distribution, such that each of its coordinates has sub-Gaussian relative moment $\rmom$ and covariance matrix $\Sigma\cdot\Sigma_1 \leq I_d$. In addition, $\trace(\Sigma\cdot\Sigma_1) = k_\gamma \geq L_0$. Therefore \thmref{thm:smallesteigwhpsg} holds for $X\sqrt{\Sigma_1}$, and $\P[\lambda_m(X\Sigma_1 X')\geq m] \geq 1-\delta$.
Clearly, for any $X$, $\lambda_m(\frac{1}{\gamma^2}X X') \geq \lambda_m(X\Sigma_1 X')$. Thus $\P[\lambda_m(\frac{1}{\gamma^2}X X')\geq m] \geq 1-\delta$.\\
\textbf{Case II:} Assume $\lambda_{k_\gamma+1} < \gamma^2$. Then $\lambda_i < \gamma^2$ for all $i \in \{k_\gamma+1,\ldots,d\}$. Let $\Sigma_2 = \diag(0,\ldots,0,1/\gamma^2,\ldots,1/\gamma^2)$,
with $k_\gamma$ zeros on the diagonal.
Then the random matrix $X\sqrt{\Sigma_2}$ is drawn from an independently sub-Gaussian 
distribution with covariance matrix $\Sigma\cdot\Sigma_2 \leq I_d$, such that all its coordinates have sub-Gaussian relative moment $\rmom$.
In addition, from the properties of $k_\gamma$ (see discussion in \secref{sec:definitions}), 
$\trace(\Sigma\cdot\Sigma_2) = \frac{1}{\gamma^2}\sum_{i=k_\gamma+1}^d \lambda_i \geq k_\gamma-1 \geq L_0-1$.
Thus \thmref{thm:smallesteigwhpsg} holds for $X\sqrt{\Sigma_2}$, and so
 $\P[\lambda_m(\frac{1}{\gamma^2}XX')\geq m] \geq \P[\lambda_m(X\Sigma_2X')\geq m] \geq 1-\delta$.

In both cases $\P[\lambda_m(\frac{1}{\gamma^2}XX')\geq m] \geq 1-\delta$ for any $m \leq \beta (k_\gamma-1)$.
By \thmref{thm:inductive}, there exists an algorithm $\cA$ such that for any $m \leq \beta (k_\gamma - 1) - 1$, $\loss_m(\cA_\gamma,D) \geq \half-\delta/2$. Therefore, for any $\epsilon < \half - \delta/2 - \loss^*_\gamma(D)$, we have
$m(\epsilon,\gamma,D) \geq \beta (k_\gamma-1)$. We get the theorem by setting $\delta = \frac{1}{4}$.
\end{proof}

\section{Summary and consequences}\label{sec:examples}

\thmref{thm:upperboundsg} and \thmref{thm:lowerboundsg} provide an upper bound and a lower bound for the sample complexity of any distribution $D$ whose data distribution is in $\dfamily_\rmom$ for some fixed $\rmom > 0$. We can thus draw the following bound, which holds for any $\gamma > 0$ and $\epsilon \in (0,\frac{1}{4} - \loss^*_\gamma(D))$:
\begin{equation}\label{eq:doublebound}
 \Omega(k_\gamma(D_X)) \leq m(\epsilon,\gamma,D) \leq \tilde{O}(\frac{k_{\gamma}(D_X)}{\epsilon^2}).
 \end{equation}
In both sides of the bound, the hidden constants depend only on the constant $\rmom$. This result shows that the true sample complexity of learning each of these distributions is characterized by the $\gamma$-\kgname. An interesting conclusion can be drawn as to the influence of the conditional distribution of labels $D_{Y|X}$: Since \eqref{eq:doublebound} holds for any $D_{Y|X}$, the effect of the direction of the best separator on the sample complexity is bounded, even for highly non-spherical distributions.
We can use \eqref{eq:doublebound} to easily characterize the sample complexity behavior for interesting distributions, and to compare $L_2$ margin minimization to learning methods.

\textbf{Gaps between $L_1$ and $L_2$ regularization in the presence of
  irrelevant features}.
Ng \cite{Ng04} considers learning a single relevant
feature in the presence of many irrelevant features, and compares
using $L_1$ regularization and $L_2$ regularization.  When $\norm{X}_{\infty} \leq 1$,
upper bounds on learning with $L_1$ regularization guarantee a sample
complexity of $O(\log(d))$ for an $L_1$-based learning rule
\cite{Zhang02}.  In order to compare this with the sample complexity of
$L_2$ regularized learning and establish a gap, one must use a {\em
  lower bound} on the $L_2$ sample complexity.  The argument provided by Ng
actually assumes scale-invariance of the learning rule, and is
therefore valid only for {\em unregularized} linear learning.  However,
using our results we can easily establish a lower
bound of $\Omega(d)$ for many specific distributions with
$\norm{X}_{\infty} \leq 1$ and $Y=X[1]\in \{\pm 1\}$.  For instance, when each coordinate
is an independent Bernoulli variable, the distribution is
sub-Gaussian with $\rmom=1$, and $k_1 = \ceil{d/2}$.

\textbf{Gaps between generative and discriminative learning for a Gaussian mixture}.
Consider two classes, each drawn from a unit-variance spherical
Gaussian in a high dimension $\reals^d$ and with a large distance $2v
>> 1$ between the class means, such that $d >> v^4$. Then $\P_D[X|Y=y]
= \mathcal{N}(y v\cdot e_1,I_d)$, where $e_1$ is a unit vector in
$\reals^d$.  For any $v$ and $d$, we have $D_X \in \dfamily_1$.  For
large values of $v$, we have extremely low margin error at $\gamma =
v/2$, and so we can hope to learn the classes by looking for a
large-margin separator.  Indeed, we can calculate $k_\gamma =
\ceil{d/(1+\frac{v^2}{4})}$, and conclude that the sample complexity
required is $\tilde{\Theta}(d/v^2)$.  Now consider a generative
approach: fitting a spherical Gaussian model for each class. This
amounts to estimating each class center as the empirical average of
the points in the class, and classifying based on the nearest
estimated class center.  It is possible to show that for any constant
$\epsilon>0$, and for large enough $v$ and $d$, $O(d/v^4)$ samples are
enough in order to ensure an error of $\epsilon$.  This establishes a
rather large gap of $\Omega(v^2)$ between the sample complexity of the
discriminative approach and that of the generative one. 

To summarize, we have shown that the true sample complexity of large-margin learning of a rich family of specific
distributions is characterized by the
$\gamma$-\kgname. This result allows true comparison between this learning algorithm
and other algorithms, and has various applications, such as semi-supervised learning and feature
construction. The challenge of characterizing true
sample complexity extends to any distribution and any learning algorithm. We
believe that obtaining answers to these questions is of great importance, both to learning theory and to learning applications.

\subsubsection*{Acknowledgments}
The authors thank Boaz Nadler for many insightful discussions, and Karthik Sridharan for pointing out \cite{Bousquet02} to us. 
Sivan Sabato is supported by the Adams Fellowship Program of the Israel Academy of Sciences and Humanities. This work was supported by the NATO SfP grant 982480.

\bibliographystyle{unsrt}
{\small{
\bibliography{bib}
}}

\newpage

\appendix

\section{Proofs for ``Tight Sample Complexity of Large-Margin Learning'' \mbox{(S. Sabato, N. Srebro and N. Tishby)}}

\subsection{Proof of \lemref{lem:kgammagrowth}}
\begin{proof}
The inequality $k_\gamma \leq k_{\alpha\gamma}$ is trivial from the definition of $k_\gamma$. For the other inequality, note first that we can always let $\E_{X\sim D_X}[XX']$ be diagonal by rotating the axes w.l.o.g.~.
Therefore $k_\gamma = \min \{ k \mid \sum_{i=k+1}^d \lambda_i \leq \gamma^2 k\}$.
Since $k_{\gamma} \leq k_{\alpha\gamma}$, we have $\gamma^2 k_\gamma \geq \sum_{i=k_\gamma+1}^d \lambda_i \geq \sum_{i=k_{\alpha\gamma+1}}^d \lambda_i.$
In addition, by the minimality of $k_{\alpha\gamma}$, $\sum_{k_{\alpha\gamma}}^d \lambda_i > \alpha^2\gamma^2(k_{\alpha\gamma}-1)$. Thus $\sum_{i=k_{\alpha\gamma+1}}^d \lambda_i > \alpha^2\gamma^2(k_{\alpha\gamma}-1)-\lambda_{k_{\alpha\gamma}}$. Combining the inequalities we get 
$\gamma^2 k_\gamma > \alpha^2\gamma^2(k_{\alpha\gamma}-1)-\lambda_{k_{\alpha\gamma}}$.
In addition, if $k_{\gamma} < k_{\alpha\gamma}$ then $\gamma^2 k_\gamma \geq \sum_{i=k_{\alpha\gamma}}^d \lambda_i \geq \lambda_{k_{\alpha\gamma}}$. Thus,
either $k_{\gamma} = k_{\alpha\gamma}$ or $2\gamma^2 k_\gamma > \alpha^2\gamma^2(k_{\alpha\gamma}-1)$.
\end{proof}

\subsection{Details omitted from the proof of  \thmref{thm:fatshattering}}
The proof of \thmref{thm:fatshattering} is complete except for the construction of $\tilde{X}$ and $\tilde{P}$ in the first paragraph, which is disclosed here in full, using the following lemma:
\begin{lemma}\label{lem:shatterexact}
Let $S = (X_1,\ldots,X_m)$ be a sequence of elements in $\reals^d$,
and let $X$ be a $m \times d$ matrix whose rows are the elements of $S$.
If $S$ is $\gamma$-shattered, then for  every $\epsilon >0$ there is a column vector $r \in \reals^d$ such that for every $y \in \{\pm\gamma\}^m$  there is a $w_y \in B^{d+1}_{1+\epsilon}$ such that $\widetilde{X} w_y = y$, where $\widetilde{X} = \begin{pmatrix}X  &r\end{pmatrix}$.
\end{lemma}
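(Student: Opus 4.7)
The plan is to take the bias offsets $r_1,\dots,r_m$ guaranteed by $\gamma$-shattering, scale them into the extra column of $\tilde{X}$, and then use a separating-hyperplane argument to show that the image of $B^{d+1}_{1+\epsilon}$ under $\tilde{X}$ already contains every vertex of the cube $\{-\gamma,+\gamma\}^m$.

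First, by the definition of $\gamma$-shattering there exist $r_1,\dots,r_m \in \reals$ and, for each $y \in \binm$, a weight $w_y \in \ball$ with $y_i(\dotprod{w_y,X_i} - r_i) \geq \gamma$ for all $i$. Set $r_0 := (r_1,\dots,r_m)^\top \in \reals^m$, choose $R > 0$ large enough that $\sqrt{1+1/R^2} \leq 1+\epsilon$, and take the appended column to be $r := R\,r_0$. Then the augmented weight $\tilde{w}_y := (w_y,\,-1/R) \in \reals^{d+1}$ has norm at most $1+\epsilon$, and $\tilde{X}\tilde{w}_y = Xw_y - r_0 =: u_y$ satisfies $y_i\,u_y[i] \geq \gamma$ for every coordinate $i$ by the shattering inequality.

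The main remaining step is to upgrade these ``corner-of-orthant'' vectors $u_y$ to \emph{exact} equality with each target vertex $\gamma y^{*}$, $y^{*} \in \binm$. The image $K := \tilde{X}(B^{d+1}_{1+\epsilon})$ is a closed convex subset of $\reals^m$ containing all the $u_y$, hence contains $\conv\{u_y : y \in \binm\}$, so it suffices to show $\gamma y^{*} \in \conv\{u_y\}$ for every $y^{*}$. I will argue this by separation: were it false, there would exist $a \in \reals^m$ with $\dotprod{a,\gamma y^{*}} > \max_{y}\dotprod{a,u_y}$. Choosing $y := \sign(a)$ coordinatewise, the shattering inequality $y_i u_y[i] \geq \gamma$ forces $u_y[i]$ to share the sign of $a_i$ with $|u_y[i]| \geq \gamma$, so $a_i u_y[i] \geq |a_i|\gamma$, and hence $\dotprod{a,u_y} \geq \gamma\|a\|_1 \geq \dotprod{a,\gamma y^{*}}$, contradicting the supposed separation.

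Putting the pieces together, for every $y^{*} \in \binm$ the vertex $\gamma y^{*}$ lies in $K$, yielding the desired $w_{y^{*}} \in B^{d+1}_{1+\epsilon}$ with $\tilde{X}w_{y^{*}} = \gamma y^{*}$. The principal obstacle is exactly the upgrade from margin inequalities to exact equalities, done uniformly over all $2^m$ target labelings; the specific choice $y = \sign(a)$ in the separation step is the observation that makes this upgrade go through, because it simultaneously uses shattering in every coordinate where $a_i \neq 0$.
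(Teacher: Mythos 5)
Your proof is correct, and the embedding step (scaling the shattering offsets into an extra coordinate at cost $\epsilon$ in norm) is the same idea the paper uses. Where you genuinely depart from the paper is in the upgrade from margin inequalities to exact equalities. The paper isolates this as a standalone combinatorial fact (Lemma~\ref{lem:conv}: any finite family $\{r_y\}_{y\in\binm}$ with $r_y[i]y[i]\ge 1$ has every sign vertex of the cube in its convex hull) and proves it by induction on $m$, peeling off one coordinate at a time and taking a suitable convex combination of a point from the $Y_+$ side and one from the $Y_-$ side. You instead prove the same containment by a duality argument: assume a strictly separating functional $a$ exists, plug in the adversarial labeling $y=\sign(a)$, and observe that the shattering inequality then forces $\dotprod{a,u_y}\ge\gamma\norm{a}_1\ge\dotprod{a,\gamma y^*}$, contradicting separation. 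This is a clean alternative; the induction is more elementary but longer, while your separation argument is shorter and makes the role of the $\ell_1$ norm transparent (the cube $\{\pm\gamma\}^m$ is the unit ball of the $\ell_\infty$-dual restricted to sign vectors). One small point to tidy: when $a_i=0$ you should fix a convention such as $\sign(0)=+1$ so that $y\in\binm$; the displayed bound $a_i u_y[i]\ge|a_i|\gamma$ still holds trivially in that case, so the argument survives. Also, after establishing $\gamma y^*\in\conv\{u_y\}\subseteq\tilde{X}(B^{d+1}_{1+\epsilon})$ you rely, as the paper does, on the image of a convex set under a linear map being convex, which is fine; the hull is a polytope so the strict separating hyperplane theorem applies without any closure subtleties.
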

\begin{proof}
if $S$ is $\gamma$-shattered then there exists a vector $r \in \reals^d$, such that 
for all  $y \in \binm$ there exists $w_y \in \ball$ such that for all $i \in [m], y_i(\dotprod{X_i, w_y} - r_i) \geq \gamma$. For $\epsilon > 0$ define $\widetilde{w}_y = (w_y, \sqrt{\epsilon}) \in B_{1+\epsilon}$, and $\widetilde{r} = r/\sqrt{\epsilon}$, and let $\widetilde{X} = \begin{pmatrix}X  &\widetilde{r}\end{pmatrix}$. For every $y\in \binm$ there is a vector $t_y \in \reals^m$ such that $\forall i\in[m], \frac{1}{\gamma}t_y[i]y[i] \geq 1$, and $\frac{1}{\gamma}\widetilde{X}\widetilde{w}_y = \frac{1}{\gamma}t_y$. As in the proof of necessity in \thmref{thm:shattercond}, it follows that there exists $\widehat{w}_y \in B_{1+\epsilon}$ such that $\frac{1}{\gamma}\widetilde{X}\widehat{w}_y = y$. Scaling $y$ by $\gamma$, we get the claim of the theorem.
\end{proof}

Now,
Let $X$ be a $m\times d$ matrix whose rows are a set of $m$ points in $\reals^d$ which is $\gamma$-shattered. 
By \lemref{lem:shatterexact}, for any $\epsilon > 0$ there exists matrix $\tilde{X}$ of dimensions $m\times (d+1)$ such that the first $d$ columns of $\tilde{X}$ are the respective columns of $X$, and for all $y \in \{pm\gamma\}^m$, there is a $w_y \in B^{d+1}_{1+\epsilon}$ such that $\widetilde{X} w_y = y$. Since $\cX$ is $(B^2,k)$-limited, there exists an orthogonal projection matrix $P$ of size $d\times d$ and rank $d-k$ such that $\forall i\in [m],\norm{X'_i P}^2 \leq B^2$. Let $\widetilde{P}$ be the embedding of $P$ in a $(d+1)\times (d+1)$ zero matrix, so that $\widetilde{P}$ is of the same rank and projects onto the same subspace.
The rest of the proof follows as in the body of the paper.

%
%

\subsection{Proof of \thmref{thm:upperboundsg}}
\begin{proof}[Proof of \thmref{thm:upperboundsg}]
Let $\Sigma = \diag(\lambda_1,\ldots,\lambda_d)$ be the covariance matrix of $D_X$, where $\forall i\in [d-1], \lambda_i \geq \lambda_{i-1}$. 
Define $\cX_\alpha = \{x \in \reals^d \mid \sum_{i=k_\gamma(D_X)+1}^d x[i]^2 \leq \alpha\}$.

Let $\{x_i\}_{i=1}^m$ be an i.i.d. sample of size $m$ drawn from $D_X$. We will select $\alpha$ such that the probability that the whole sample is contained in $\cX_\alpha$ is large.
$\P[\forall i \in [m], x_i \in \cX_\alpha] = (1-\P[x_i\notin \cX_\alpha])^m$.
Let $X \sim D_X$. Then for all $t > 0$, $\P[X \notin \cX_\alpha] = \P[\sum_{i=k_\gamma+1}^d X[i]^2 \geq \alpha] \leq \E[\exp(t \sum_{i=k_\gamma+1}^d X[i]^2)]\exp(-t\alpha). $

Let $\lambda_{\max} = \lambda_{k_\gamma+1}$.
Define $Y \in \reals^d$ such that $Y[i] = X[i]\sqrt{\frac{\lambda_{\max}}{\lambda_i}}$. 
Then $\sum_{i=k_\gamma+1}^d X[i]^2 = \sum_{i=k_\gamma+1}^d \frac{\lambda_i}{\lambda_{\max}} Y[i]^2$, and by the definition of $k_\gamma$,  $\sum_{i=k_\gamma+1}^d \frac{\lambda_i}{\lambda_{\max}}\leq \frac{k_\gamma}{\lambda_{\max}}$. Thus, by \lemref{lem:partition}
\[
\E[\exp(t\sum_{i=k_\gamma+1}^d X[i]^2)] \leq \max_i
(\E[\exp(3tY[i]^2)])^{\ceil{k_\gamma/\lambda_{\max}}}.
\]
For every $i$, $Y[i]$ is a sub-Gaussian random variable with moment $B=\rmom\sqrt{\lambda_{\max}}$. By \cite{BuldyginKo98}, Lemma 1.1.6, $\E[\exp(3tY[i]^2)] \leq (1-6\rmom^2\lambda_{\max}t)^{-\half}$, for $t \in (0,(6\rmom^2\lambda_{max})^{-1})$. Setting $t = \frac{1}{12\rho^2\lambda_{\max}}$,
\[
\P[X \notin \cX_\alpha] \leq 2^{k_\gamma/\lambda_{\max}}\exp(-\frac{\alpha}{12\rmom^2\lambda_{\max}}).
\]

Thus there is a constant $C$ such that for $\alpha(\gamma) \triangleq C\cdot \rmom^2(k_\gamma(D_X) +
\lambda_{\max{}}\ln \frac{m}{\delta})$, $\P[X \notin \cX_{\alpha(\gamma)}] \leq 1-\frac{\delta}{2m}$.  Clearly, $\lambda_{\max}\leq k_\gamma(D_X)$, and $k_{\gamma}(\cX_{\alpha(\gamma)}) \leq \alpha(\gamma)$. Therefore, from \thmref{thm:fatshattering}, the
$\gamma$-fat-shattering dimension of $\cW(\cX_{\alpha(\gamma)})$ is $O(\rmom^2k_\gamma(D_X)\ln\frac{m}{\delta})$. Define $D_\gamma$ to be the distribution such that $\P_{D_\gamma}[(X,Y)] = \P_{D_X}[(X,Y) \mid X \in \cX_{\alpha(\gamma)}]$.
By standard sample complexity bounds \citep{AnthonyBa99}, for any distribution $D$ over $\reals^d \times \{\pm 1\}$, with probability at
least $1-\frac{\delta}{2}$ over samples, $\loss_m(\cA,D) \leq
\tilde{O}(\sqrt{\frac{F(\gamma/8,D)\ln{\frac{1}{\delta}}}{m}})$, where $F(\gamma,D)$ is
the $\gamma$-fat-shattering dimension of the class of linear functions with domain restricted to the support of $D$ in $\reals^d$. Consider $D_{\gamma/8}$. 
Since the support of $D_{\gamma/8}$ is $\cX_{\alpha(\gamma/8)}$, $F(\gamma/8,D_{\gamma/8}) \leq O(\rmom^2k_{\gamma/8}(D_X)\ln\frac{m}{\delta})$.
With probability $1-\delta$ over samples from $D_X$, the sample is
drawn from $D_{\gamma/8}$. In addition, the probability of the
unlabeled example to be drawn from $\cX_{\alpha(\gamma/8)}$ is larger than
$1-\frac{1}{m}$. Therefore $\loss_m(\cA,D) \leq
\tilde{O}(\sqrt{\frac{\rmom^2k_{\gamma/8}(D_X)\ln{\frac{m}{\delta}}}{m}})$.
Setting $\delta = \epsilon/2$ and bounding the expected error, we get $m(\epsilon,\gamma,D) \leq \tilde{O}(\frac{\rmom^2 k_{\gamma/8}(D_X)}{\epsilon^2})$. \lemref{lem:kgammagrowth} allows replacing $k_{\gamma/8}$ with $O(k_{\gamma})$.
\end{proof}

\begin{lemma}\label{lem:partition}
Let $T_1,\ldots,T_d$ be independent 
 random variables such that all the moments $\E[T_i^n]$ for all $i$ are non-negative.
Let $\lambda_1,\ldots,\lambda_d$ be real coefficients such that $\sum_{i=1}^d \lambda_i = L$, and $\lambda_i \in [0,1]$ for all $i\in[d]$.
Then for all $t \geq 0$ 
\[
\E[\exp(t\sum_{i=1}^d \lambda_i T_i)]\leq \max_{i\in[d]}(\E[\exp(3tT_i)])^{\ceil{L}}.
\]
\end{lemma}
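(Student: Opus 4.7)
The plan is to use independence to factorize the moment generating function, partition the coordinates into $\lceil L\rceil$ groups on which the coefficients sum to at most $3$, and apply Jensen's inequality within each group.

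First I would partition $\{1,\ldots,d\}$ into $K\le \lceil L\rceil$ groups $G_1,\ldots,G_K$ with $\sum_{i\in G_k}\lambda_i\le 3$ for every $k$. A greedy bin-packing that opens a new bin whenever the next $\lambda_i$ would overflow capacity $3$ does the job: each closed bin has content strictly larger than $3-1=2$ (using $\lambda_i\le 1$), so the total number of closed bins is strictly less than $L/2$, and therefore $K\le \lceil L/2\rceil\le \lceil L\rceil$.

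Then, by independence of the $T_i$, $\E[\exp(t\sum_i\lambda_iT_i)]=\prod_k \E[\exp(t\sum_{i\in G_k}\lambda_iT_i)]$. Within a single group $G_k$, I would rewrite $t\sum_{i\in G_k}\lambda_iT_i$ as a convex combination by setting $\mu_i=\lambda_i/3$ for $i\in G_k$ and $\mu_0=1-\sum_{i\in G_k}\mu_i\ge 0$, so that $t\sum_{i\in G_k}\lambda_iT_i=\sum_{i\in G_k}\mu_i\cdot(3tT_i)+\mu_0\cdot 0$. Applying Jensen's inequality to the convex function $\exp$ and then taking expectations yields
\[
\E\bigl[\exp\bigl(t\textstyle\sum_{i\in G_k}\lambda_iT_i\bigr)\bigr]\le \sum_{i\in G_k}\mu_i\,\E[\exp(3tT_i)]+\mu_0.
\]

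To collapse the right-hand side into a single maximum, I would use the non-negative moments hypothesis: since $\E[\exp(3tT_i)]=\sum_{n\ge 0}(3t)^n\E[T_i^n]/n!\ge 1$ for $t\ge 0$, each summand is bounded above by $M:=\max_i\E[\exp(3tT_i)]$, and the stray constant $\mu_0$ is also at most $\mu_0 M$; together these give $\E[\exp(t\sum_{i\in G_k}\lambda_iT_i)]\le M$. Multiplying over the $K\le \lceil L\rceil$ groups yields the claim. The only really delicate piece is the bin-packing accounting; everything else is two uses of Jensen's inequality together with the crucial inequality $M\ge 1$, which is precisely where the non-negative-moments assumption is forced into the argument.
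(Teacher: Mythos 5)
Your proof is correct but follows a genuinely different route from the paper's. The paper partitions the index set into bins of capacity $1$ (ending up with roughly $k\le 2L+1$ bins), applies a H\"older-style Jensen step $\E[\exp(aX)]\le\E[\exp(bX)]^{a/b}$ within each bin, uses the non-negative-moments hypothesis to replace each per-bin weight $L_j\le 1$ by $1$ via a term-by-term power-series comparison, and then needs a separate monotonicity fact, $\E[\exp(\tfrac{1}{a}\sum_{i=1}^a X_i)]\le\E[\exp(\tfrac{1}{b}\sum_{i=1}^b X_i)]$ for $a\ge b$ i.i.d., to shrink the exponent from $k$ down to $\lceil L\rceil$; the constant $3$ emerges there as $2+1/\lceil L\rceil\le 3$. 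You instead pack into bins of capacity $3$ directly, so the bin count $K$ is at most $\lceil L\rceil$ from the start (indeed $K\le\lceil L/2\rceil$, though the weaker bound is all you use), and inside each bin a single elementary Jensen step---$\exp$ of a convex combination is at most the convex combination of $\exp$'s---plus the observation $\E[\exp(3tT_i)]\ge 1$ finishes the job. This is shorter and avoids the copy-reduction trick entirely; the constant $3$ is now simply the bin capacity. Both proofs hinge on the non-negative-moments assumption, but you invoke it once (to ensure $M\ge 1$, which absorbs the slack mass $\mu_0$ and justifies raising to the possibly larger power $\lceil L\rceil$) while the paper invokes it at two structurally distinct points. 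All the steps check out: closed bins have content strictly greater than $2$ because each $\lambda_i\le 1$, so the number of bins is indeed bounded as claimed, and the final inequality $M^K\le M^{\lceil L\rceil}$ is valid precisely because $M\ge 1$.
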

\begin{proof}
Let $T_i$ be independent random variables. Then, by Jensen's inequality,
\[
\E[\exp(t \sum_{i=1}^d \lambda_i T_i)] = \prod_{i=1}^d \E[\exp(t \lambda_i T_i)] \leq \prod_{i=1}^d \E[\exp(t T_i \sum_{j=1}^d \lambda_j)]^{\frac{\lambda_i}{\sum_{j=1}^d \lambda_j}} \leq \max_{i\in [d]} \E[\exp(t T_i \sum_{j=1}^d \lambda_j)].
\]

Now, consider a partition $Z_1,\ldots,Z_k$ of $[d]$, and denote $L_j =\sum_{i\in Z_j} \lambda_i$.  Then
by the inequality above,
\[
\E[\exp(t \sum_{i=1}^d \lambda_i T_i)] = \prod_{j=1}^k \E[\exp(t \sum_{i \in Z_j} \lambda_i T_i)] \leq \prod_{j=1}^k \max_{i \in Z_j}\E[\exp(t T_i L_j)].
\]
Let the partition be such that for all $j\in [k]$, $L_j \leq 1$. There exists such a partition such that $L_j < \half$ for no more than one $j$. Therefore, for this partition $L = \sum_{i=1}^d \lambda_i = \sum_{j\in[k]} L_j \geq \half(k-1)$. Thus $k \leq 2L+1$. 

Now, consider $\E[\exp(t T_i L_j)]$ for some $i$ and $j$.  For any random variable $X$
\[
\E[\exp(tX)] = \sum_{n=0}^\infty \frac{t^n\E[X^n]}{n!}.
\] 
Therefore, $\E[\exp(t T_i L_j)] =
\sum_{n=0}^\infty \frac{t^{n}L_j^{n}\E[T_i^{n}]}{(n)!}$. Since $\E[T_i^n]\geq 0$ for all $n$, and $L_j \leq 1$, it follows that $\E[\exp(t T_i L_j)] \leq \E[\exp(t T_i)]$. Thus 
\[
\E[\exp(t \sum_{i=1}^d \lambda_i T_i)] \leq \prod_{j=1}^k \max_{i \in Z_j}\E[\exp(t T_i)] \leq \max_{i \in [d]} \E[\exp(t \sum_{j=1}^k T_i[j])],
\]
where $T_i[j]$ are independent copies of $T_i$.

It is easy to see that $\E[\exp[\frac{1}{a}\sum_{i=1}^a X_i]]\leq \E[\exp[\frac{1}{b}\sum_{i=1}^b X_i]]$, for $a\geq b$ and $X_1,\ldots,X_a$ i.i.d. random variables.
Since $k \geq \ceil{L}$ it follows that
\[
\E[\exp(t \sum_{i=1}^d \lambda_i T_i)] \leq \max_{i \in [d]} \E[\exp(t \sum_{j=1}^k T_i[j])] \leq  \max_{i \in [d]}\E[\exp(t \frac{k}{\ceil{L}} \sum_{j=1}^\ceil{L} T_i[j])].
\]
Since $k \leq 2L+1$ and all the moments of $T_i[j]$ are non-negative, it follows that
\[
\E[\exp(t \sum_{i=1}^d \lambda_i T_i)] \leq \max_{i \in [d]}\E[\exp(t (2+\frac{1}{\ceil{L}}) \sum_{j=1}^\ceil{L} T_i[j])].
\]

\end{proof}

\subsection{Proof of \thmref{thm:shattercond}}
the following lemma, which allows converting the representation
of the Gram-matrix to a different feature space while keeping the separation properties intact.
For a matrix $M$, $M^+$ denotes its pseudo-inverse. If $(M' M)$ is invertible then $M^+ = (B' B)^{-1} B'$.
\begin{lemma}\label{lem:wtilde}
Let $X$ be an $m\times d$ matrix such that $X X'$ is invertible, and $Y$ such that $XX' = Y Y'$.
Let $r \in \reals^m$ be some real vector.
If there exists a vector $\widetilde{w}$ such that $Y \widetilde{w} = r$, then
there exists a vector $w$ such that $X w = r \text{ and } \|w\| = \| P \widetilde{w} \|$,
where $P = Y' Y'^+ = Y' (Y Y')^{-1} Y$ is the projection matrix onto the sub-space spanned by the rows of $Y$.
\end{lemma}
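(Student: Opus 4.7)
\textbf{Proof plan for \lemref{lem:wtilde}.} The natural candidate is the minimum-norm solution of the linear system $Xw=r$, namely the pseudo-inverse choice
\[
w \;\triangleq\; X^+ r \;=\; X'(XX')^{-1} r,
\]
which is well-defined since $XX'$ is invertible by assumption. My plan is to first verify feasibility and then to compute both norms in closed form and show that they coincide via the identity $XX'=YY'$.

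Feasibility is immediate: $Xw = XX'(XX')^{-1}r = r$. For the norm of $w$, a direct expansion gives $\|w\|^2 = r'(XX')^{-1}XX'(XX')^{-1}r = r'(XX')^{-1}r$, and since $XX'=YY'$ is invertible, this equals $r'(YY')^{-1}r$.

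For the right-hand side $\|P\widetilde{w}\|$, I will use that $P = Y'(YY')^{-1}Y$ is an orthogonal projection, so $P'=P$ and $P^2=P$. Hence
\[
\|P\widetilde{w}\|^2 \;=\; \widetilde{w}' P' P\, \widetilde{w} \;=\; \widetilde{w}' P \widetilde{w} \;=\; \widetilde{w}' Y' (YY')^{-1} Y \widetilde{w} \;=\; (Y\widetilde{w})'(YY')^{-1}(Y\widetilde{w}) \;=\; r'(YY')^{-1}r,
\]
using $Y\widetilde{w}=r$ in the last step. Comparing the two expressions yields $\|w\| = \|P\widetilde{w}\|$, as required.

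There is essentially no hard step here: the only ingredients are the invertibility of $XX'$ (to justify the pseudo-inverse formula), the hypothesis $XX' = YY'$ (to translate the Gram matrix of $X$ into that of $Y$), and the idempotence/symmetry of $P$ (to evaluate $\|P\widetilde{w}\|^2$ as a quadratic form that depends on $\widetilde{w}$ only through $Y\widetilde{w}=r$). The slight conceptual point worth highlighting is that although $\widetilde{w}$ itself may have a larger norm than $w$, only its component $P\widetilde{w}$ in the row span of $Y$ matters, which is exactly what makes the identity work and mirrors the fact that $w$ was chosen to lie in the row span of $X$.
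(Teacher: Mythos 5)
Your proof is correct and matches the paper's argument in substance: the paper sets $w = T'\widetilde{w}$ with $T = Y'(XX')^{-1}X$, which, after using $Y\widetilde{w}=r$, is exactly your $w = X'(XX')^{-1}r = X^+r$, and both proofs reduce the two norms to the same quadratic form in the shared Gram matrix. Your version is slightly more streamlined (you substitute $Y\widetilde{w}=r$ up front and avoid introducing the intermediate operator $T$ and verifying $TT'=P$), but the construction and the key observation — that $P$ being a symmetric idempotent makes $\|P\widetilde{w}\|^2$ depend on $\widetilde{w}$ only through $Y\widetilde{w}$ — are the same.
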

\begin{proof}
Denote $K = XX' = YY'$.
Set $T = Y' X'^+ = Y' K^{-1} X$. Set $w = T' \widetilde{w}$. We have 
$X w = X T' \widetilde{w} = X X' K^{-1} Y\widetilde{w} = Y \widetilde{w} = r.$ In addition,
$
\| w \| = w' w = \widetilde{w}' T T' \widetilde{w}.
$
By definition of $T$, $T T' = Y' X'^+ X^+ Y = Y' K^+ Y = Y' K^{-1} Y = Y' (Y Y')^{-1} Y = Y' Y'^{+} = P.$
Since $P$ is a projection matrix, we have $P^2 = P$. In addition, $P = P'$. Therefore $T T' = P P'$, and so
$
\| w \| = \widetilde{w}' P P' \widetilde{w} = \| P \widetilde{w} \|.
$
\end{proof}
The next lemma will allow us to prove that if a set is shattered at the origin, it can be separated with the exact margin.

\begin{lemma}\label{lem:conv}
Let $R = \{r_y \in \reals^m \mid y \in \binm\}$ such that for all $y\in \binm$ and for all $i \in [m]$, $r_y[i]y[i] \geq 1$. 
Then $\forall y \in \binm, y \in \conv(R)$. 
\end{lemma}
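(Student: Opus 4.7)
My plan is to prove the lemma by contradiction, using the separating hyperplane theorem. The set $\conv(R)$ is a closed convex subset of $\reals^m$ (as the convex hull of the finite set $R$), so if some $y^* \in \binm$ fails to lie in it, there must exist $a \in \reals^m$ and $c \in \reals$ such that $\dotprod{a,y^*} < c \leq \dotprod{a,r_y}$ for every $y \in \binm$. The goal is to derive a contradiction by selecting a particular $y$ that makes $\dotprod{a,r_y}$ small enough to violate this separation.

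The natural choice is to pick $y \in \binm$ so that $a[i]y[i] \leq 0$ for every coordinate $i$ (that is, $y[i] = -\sign(a[i])$ with arbitrary sign where $a[i]=0$). For this $y$, the hypothesis $r_y[i]y[i] \geq 1$ lets me write $r_y[i] = y[i] s_i$ with $s_i \geq 1$, so $a[i]r_y[i] = a[i]y[i]\cdot s_i$. Because $a[i]y[i] = -|a[i]| \leq 0$ and $s_i \geq 1$, multiplying by $s_i$ can only push this further in the negative direction, giving $a[i]r_y[i] \leq -|a[i]|$. Summing, $\dotprod{a,r_y} \leq -\norm{a}_1$.

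On the other hand, for any $y^* \in \binm$ we have $|y^*[i]| = 1$, so $\dotprod{a,y^*} = \sum_i a[i]y^*[i] \geq -\sum_i |a[i]| = -\norm{a}_1$. Combining the two inequalities yields $-\norm{a}_1 \leq \dotprod{a,y^*} < c \leq \dotprod{a,r_y} \leq -\norm{a}_1$, which is a contradiction. Hence $y^* \in \conv(R)$, completing the proof.

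The argument is essentially a one-step separation argument, so I do not anticipate any real obstacle; the only point that needs a moment of care is handling coordinates where $a[i] = 0$ (which is absorbed trivially into the $|a[i]| = 0$ terms) and correctly tracking signs when invoking the hypothesis $r_y[i]y[i] \geq 1$ to convert it into the coordinatewise inequality $a[i]r_y[i] \leq -|a[i]|$.
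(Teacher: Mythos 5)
Your proof is correct, but it takes a genuinely different route from the paper's. The paper proves the lemma by induction on the dimension $m$: it partitions the labelings into those with $y[m]=+1$ and those with $y[m]=-1$, applies the inductive hypothesis to the projections onto the first $m-1$ coordinates, and then explicitly constructs a convex combination of the two resulting points (one with last coordinate $\geq 1$, the other $\leq -1$) to recover $y^*$. This is a constructive argument that actually exhibits the convex weights. Your argument is a duality argument via the separating hyperplane theorem: assume $y^*\notin\conv(R)$, take a strictly separating functional $a$, and then choose the adversarial labeling $y[i]=-\sign(a[i])$; the hypothesis $r_y[i]y[i]\geq 1$ forces $\dotprod{a,r_y}\leq -\|a\|_1$, while every sign vector satisfies $\dotprod{a,y^*}\geq -\|a\|_1$, which contradicts the strict separation. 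Your version is shorter and avoids bookkeeping of the induction, at the cost of being non-constructive and relying on the (standard, but external) separation theorem for closed convex sets; the paper's version is more self-contained and produces the combination explicitly. Both are valid; the key insight you need, picking $y$ to oppose the sign pattern of the separating normal $a$, is exactly the right move.
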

\begin{proof}
We will prove the claim by induction on the dimension $m$. 

{\bf Induction base}: For $m=1$, we have $R = \{(a),(b)\}$ where $a \leq -1$ and $b \geq 1$. Clearly, $\conv{R} = [a,b]$, and the two one-dimensional vectors $(+1)$ and $(-1)$ are in $[a,b]$. 

{\bf Induction step}: For a vector $t = (t[1],\ldots,t[m]) \in \reals^m$, denote by $\bar{t}$ its projection $(t[1],\ldots,t[m-1])$ on $\reals^{m-1}$. Similarly, for a set of vectors $S \subseteq \reals^m$, let \mbox{$\bar{S} = \{\bar{s} \mid s \in S\} \subseteq \reals^{m-1}$}. Define 
\begin{align*}
Y_+ &= \{y \in \binm \mid y[m] = +1\}\\
Y_- &= \{y \in \binm \mid y[m] = -1\}.
\end{align*}
Let $R_+ = \{r_y \mid y\in Y_+ \}$, and similarly for $R_-$. Then $\bar{R}_+$ and $\bar{R}_-$ satisfy the assumptions for $R$ when $m-1$ is substituted for $m$.

Let $y^* \in \binm$. We wish to prove $y^* \in \conv(R)$.
From the induction hypothesis we have $\bar{y}^* \in \conv(\bar{R}_+)$ and $\bar{y}^* \in \conv(\bar{R}_-)$. Thus 
\[
\bar{y}^* = \sum_{y\in Y_+} \alpha_y \bar{r}_y = \sum_{y\in Y_-} \beta_y \bar{r}_y,
\]
where $\alpha_y,\beta_y \geq 0$, $\sum_{y\in Y_+} \alpha_y = 1$, and $\sum_{y\in Y_-} \beta_y =1$. 
Let $y^*_a = \sum_{y\in Y_+} \alpha_y r_y$ and $y^*_b = \sum_{y\in Y_-} \alpha_y r_y$. 
We have that $\forall y \in Y_+, r_y[m] \geq 1$, and $\forall y \in Y_-, r_y[m] \leq -1$. Therefore, $y^*_a[m] \geq 1$ and $y^*_b[m] \leq -1$. In addition, $\bar{y}^*_a = \bar{y}^*_b = \bar{y}$. Hence there is $\gamma \in [0,1]$ such that $y^* = \gamma y^*_a + (1-\gamma)y^*_b$. Since $y^*_a \in \conv(R_+)$ and $y^*_b \in \conv(R_-)$, we have $y^* \in \conv(R)$.
\end{proof}

\begin{proof}[Proof of \thmref{thm:shattercond}]
Let $XX' = U \Lambda U'$ be the SVD of $XX'$, where $U$ is an orthogonal matrix and $\Lambda$ is a diagonal matrix.
Let $Y = U \Lambda^\half$. We have $XX' = Y Y'$. We show that the conditions are sufficient and necessary for the shattering of $S$.

{\bf Sufficient}: Assume $XX'$ is invertible.  Then $\Lambda$ is invertible, thus $Y$ is invertible. For any $y\in \binm$, Let $\widetilde{w} = Y^{-1} y$. We have $Y \widetilde{w} = y$. In addition, $\|\widetilde{w}\|^2 = y' (YY')^{-1}y = y' (XX')^{-1}y \leq 1$. Therefore, by \lemref{lem:wtilde}, 
there exists a separator $w$ such that $Xw = y$ and $\|w\| = \|P \widetilde{w}\| = \| \widetilde{w} \|$.

{\bf Necessary}: If $XX'$ is not invertible then the vectors in $S$ are linearly dependent, thus by standard VC-theory \citep{AnthonyBa99} $S$ cannot be shattered using linear separators. The first condition is therefore necessary. We assume $S$ is $1$-shattered at the origin and show that the second condition necessarily holds. Let $L = \{r \mid \exists w \in \ball, Xw = r\}$. Since $S$ is shattered, 
For any $y \in \binm$ there exists $r_y \in L$ such that $\forall i\in [m], r_y[i]y[i] \geq 1$. By \lemref{lem:conv}, $\forall y\in \binm, y \in \conv(R)$ where $R = \{r_y \mid y\in \binm\}$.
Since $L$ is convex and $R \subseteq L$, $\conv(R) \subseteq L$. Thus for all $y \in \binm$, $y \in L$, that is there exists $w_y \in \reals^m$ such that $Xw_y = y$ and $\|w_y\|\leq 1$.
From \lemref{lem:wtilde} we thus have $\widetilde{w}_y$ such that $Y \widetilde{w}_y = y$ and $\norm{\widetilde{w}_y} = \norm{P w_y} \leq \norm{w_y} \leq 1$. $Y$ is invertible, hence $\widetilde{w}_y = Y^{-1}y$. Thus $y'(XX')^{-1}y = y'(YY')^{-1}y = \norm{\widetilde{w}_y} \leq 1$.
\end{proof}

\subsection{Proof of Theorem \lowercase{\ref{thm:smallesteigwhpsg}}}\label{app:smallesteigwhpsg}

In the proof of \thmref{thm:smallesteigwhpsg} we use the fact   
$\lambda_m(XX') = \inf_{\norm{x}_2=1}\norm{X'x}^2$ and bound the right-hand side
via an $\epsilon$-net of the unit sphere in $\reals^m$, denoted by $S^{m-1} \triangleq \{ x \in \reals^m \mid \norm{x}_2 = 1\}$. 
An $\epsilon$-net of the unit sphere is a set $C \subseteq S^{m-1}$ such that 
$\forall x \in S^{m-1}, \exists x' \in C, \norm{x-x'}\leq \epsilon$. Denote the minimal size of an $\epsilon$-net for $S^{m-1}$ by $\cN_m(\epsilon)$, and by $\cC_m(\epsilon)$ a minimal $\epsilon$-net of $S^{m-1}$, so that $\cC_m(\epsilon) \subseteq S^{m-1}$ and $|\cC_m(\epsilon)| = \cN_m(\epsilon)$.
The proof of \thmref{thm:smallesteigwhpsg}
requires several lemmas.
First we prove a concentration result for the norm of a matrix defined by sub-Gaussian variables.
Then we bound the probability that the squared norm of a vector is small.
\begin{lemma}\label{lem:boundnormsubg}
Let $Y$ be a $d\times m$ matrix with $m \leq d$, such that $Y_{ij}$ are independent sub-Gaussian variables with moment $B$.
Let $\Sigma$ be a diagonal $d\times d$ PSD matrix such that $\Sigma \leq I$. Then for all $t \geq 0$ and $\epsilon \in (0,1)$,
\[
\P[\norm{\sqrt{\Sigma}Y} \geq t] \leq \cN_m(\epsilon)\exp(\frac{\tr(\Sigma)}{2}-\frac{t^2(1-\epsilon)^2}{4B^2}).
\]
\end{lemma}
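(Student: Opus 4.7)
The plan is to follow the standard two-step route for controlling the operator norm of a random matrix: reduce to a supremum over an $\epsilon$-net of $S^{m-1}$, then obtain a subgaussian tail bound for a fixed direction and union bound over the net.

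First I would use the standard net approximation: for any matrix $M$ and any $\epsilon$-net $C \subseteq S^{m-1}$, one has $\norm{M} \leq \frac{1}{1-\epsilon}\max_{x \in C}\norm{Mx}$ (by writing an arbitrary $x \in S^{m-1}$ as $x = x' + (x-x')$ with $x' \in C$ and taking supremum). Taking $C = \cC_m(\epsilon)$ and applying the union bound therefore gives
\[
\P[\norm{\sqrt{\Sigma}Y} \geq t] \leq \cN_m(\epsilon) \max_{x \in S^{m-1}} \P[\norm{\sqrt{\Sigma}Yx} \geq t(1-\epsilon)].
\]

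Next I would bound the one-vector probability for an arbitrary fixed $x \in S^{m-1}$. The $i$th coordinate of $\sqrt{\Sigma}Yx$ equals $\sqrt{\Sigma_{ii}}\sum_j Y_{ij}x_j$. Because the $Y_{ij}$ are independent $B$-sub-Gaussian across $j$, the linear combination $\sum_j Y_{ij}x_j$ is sub-Gaussian with moment $B\norm{x} = B$, and since different rows of $Y$ are independent the coordinates $Z_i \triangleq (\sqrt{\Sigma}Yx)_i$ are independent, each sub-Gaussian with moment $B\sqrt{\Sigma_{ii}} \leq B$. I would then apply a Chernoff bound to $\norm{Z}^2 = \sum_i Z_i^2$: for $s > 0$,
\[
\P[\norm{Z}^2 \geq t^2(1-\epsilon)^2] \leq e^{-st^2(1-\epsilon)^2}\prod_i \E[\exp(sZ_i^2)].
\]
Using the standard squared-subgaussian MGF inequality (Lemma~1.1.6 of \cite{BuldyginKo98}), for $s < (2B^2\Sigma_{ii})^{-1}$ one has $\E[\exp(sZ_i^2)] \leq (1-2sB^2\Sigma_{ii})^{-1/2}$. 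Choosing $s = 1/(4B^2)$, which is admissible because $\Sigma \leq I$ implies $2sB^2\Sigma_{ii} \leq 1/2$, and using the elementary inequality $(1-u)^{-1/2} \leq \exp(u)$ valid on $[0,1/2]$, the product telescopes to $\exp\bigl(\tfrac{1}{2}\tr(\Sigma)\bigr)$.

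Combining these steps yields
\[
\P[\norm{\sqrt{\Sigma}Yx} \geq t(1-\epsilon)] \leq \exp\!\Bigl(\tfrac{1}{2}\tr(\Sigma) - \tfrac{t^2(1-\epsilon)^2}{4B^2}\Bigr),
\]
and multiplying by $\cN_m(\epsilon)$ gives the claim. The only delicate point is verifying the MGF inequality for $Z_i^2$ with the correct constants so that the exponent lands on $\tr(\Sigma)/2$ rather than something larger; this forces the choice $s = 1/(4B^2)$, and the assumption $\Sigma \leq I$ is exactly what makes this choice uniformly admissible across coordinates.
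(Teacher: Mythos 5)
Your proof is correct and reaches exactly the stated bound. The net reduction and union bound agree with the paper; the difference is in how you control the tail of $\norm{\sqrt{\Sigma}Yx}$ for a fixed direction. The paper first verifies a vector-level sub-Gaussian moment bound $\E[\exp(\dotprod{u,V})]\leq\exp(B^2\dotprod{\Sigma u,u}/2)$ for $V=\sqrt{\Sigma}Yx$ and then invokes a separate general lemma (\lemref{lem:sgvecmgf}, proved via Gaussian integration) to get $\E[\exp(s\norm{V}^2)]\leq\exp(2sB^2\tr(\Sigma))$ at $s=1/(4B^2)$, followed by Chernoff. You instead exploit the diagonality of $\Sigma$ to observe that the coordinates $Z_i$ of $V$ are independent scalar sub-Gaussians with moments $B\sqrt{\Sigma_{ii}}\leq B$, apply the scalar squared-sub-Gaussian MGF inequality (Buldygin--Kozachenko Lemma~1.1.6) to each, and use $(1-u)^{-1/2}\leq\exp(u)$ on $[0,1/2]$ to telescope; this lands on the same $\exp(\tr(\Sigma)/2 - t^2(1-\epsilon)^2/(4B^2))$ after the same choice $s=1/(4B^2)$. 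Your route is more elementary and avoids the auxiliary \lemref{lem:sgvecmgf} entirely; the paper's route is slightly more general in that \lemref{lem:sgvecmgf} does not presuppose independence among the coordinates of $V$, only the vector MGF bound, and is reused as a stand-alone tool. Both correctly use $\Sigma\leq I$ to make $s=1/(4B^2)$ admissible uniformly across coordinates.
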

\begin{proof}
We have $\norm{\sqrt{\Sigma}Y} \leq \max_{x \in \cC_m(\epsilon)}\norm{\sqrt{\Sigma}Yx}/(1-\epsilon)$, see for instance in \cite{Bennet75}.
Therefore, 
\begin{equation}\label{eq:mat2vec}
\P[\norm{\sqrt{\Sigma}Y} \geq t] \leq \sum_{x \in \cC_m(\epsilon)}\P[\norm{\sqrt{\Sigma}Yx} \geq (1-\epsilon)t].
\end{equation}
Fix $x \in \cC_m(\epsilon)$. Let $V = \sqrt{\Sigma}Yx$, and assume $\Sigma = \diag(\lambda_1,\ldots,\lambda_d)$. For $u \in \reals^d$,
\begin{align*}
&\E[\exp(\dotprod{u, V})] = \E[\exp(\sum_{i\in[d]}u_i\sqrt{\lambda}_i \sum_{j\in[m]}Y_{ij}x_j)]
= \prod_{j,i}\E[\exp(u_i\sqrt{\lambda}_i Y_{ij}x_j)]\\
&\quad\leq \prod_{j,i}\exp(u_i^2\lambda_i B^2 x_j^2/2) = \exp(\frac{B^2}{2}\sum_{i\in[d]}u_i^2\lambda_i \sum_{j\in[m]}x_j^2 )\\
&\quad= \exp(\frac{B^2}{2} \sum_{i\in[d]}u_i^2\lambda_i) = \exp(\dotprod{B^2\Sigma u,u}/2).
\end{align*}
Let $s = 1/(4B^2)$. Since $\Sigma \leq I$, we have $s \leq 1/(4B^2\max_{i\in[d]}\lambda_i)$.
Therefore, by \lemref{lem:sgvecmgf} (see \secref{sec:sgvecmgf}), 
\[
\E[\exp(s\norm{V}^2)] \leq \exp(2sB^2\tr(\Sigma)).
\]
By Chernoff's method, 
$\P[\norm{V}^2 \geq z^2] \leq \E[\exp(s\norm{V}^2)]/\exp(sz^2)$. Thus
\[
\P[\norm{V}^2 \geq z^2] \leq \exp(2sB^2\tr(\Sigma) - sz^2) = \exp(\frac{\tr(\Sigma)}{2}-\frac{z^2}{4B^2}).
\]
Set $z = t(1-\epsilon)$. Then for all $x \in S^{m-1}$
\[
\P[\norm{\sqrt{\Sigma}Yx} \geq t(1-\epsilon)] = \P[\norm{V} \geq t(1-\epsilon)] \leq \exp(\frac{\tr(\Sigma)}{2}-\frac{t^2(1-\epsilon)^2}{4B^2}).
\]
Therefore, by \eqref{eq:mat2vec},
\[
\P[\norm{\sqrt{\Sigma}Y} \geq t] \leq \cN_m(\epsilon)\exp(\frac{\tr(\Sigma)}{2}-\frac{t^2(1-\epsilon)^2}{4B^2}).
\]
\end{proof}

\begin{lemma}\label{lem:boundsigmayxsubg}
Let $Y$ be a $d\times m$ matrix with $m \leq d$, such that $Y_{ij}$ are independent 
centered random variables with variance $1$ and fourth moments at most $B$.
Let $\Sigma$ be a diagonal $d\times d$ PSD matrix such that $\Sigma \leq I$.
There exist $\alpha > 0$ and $\eta \in (0,1)$ that depend only on $B$ such that for any $x \in S^{m-1}$
\[
\P[\norm{\sqrt{\Sigma}Yx}^2\leq \alpha\cdot (\tr(\Sigma)-1)] \leq \eta^{\tr(\Sigma)}.
\]
\end{lemma}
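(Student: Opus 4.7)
The plan is to apply Chernoff's method to the lower tail of $\|V\|^2$ where $V = \sqrt{\Sigma}Yx$. Writing $\Sigma = \diag(\lambda_1,\ldots,\lambda_d)$ with each $\lambda_i\in[0,1]$, the coordinates $V_i = \sqrt{\lambda_i}\sum_j Y_{ij}x_j$ are independent across $i$ because they depend on disjoint rows of $Y$. Using $\sum_j x_j^2 = 1$, independence and centering of the $Y_{ij}$'s, and the fourth moment bound $\E[Y_{ij}^4]\leq B$, I would first compute $\E[V_i^2] = \lambda_i$ and bound
\[
\E[V_i^4] \;=\; \lambda_i^2\,\E\!\Big[\big(\textstyle\sum_j Y_{ij}x_j\big)^4\Big] \;\leq\; C\lambda_i^2,
\]
where $C = C(B)$; the second inequality follows by expanding the fourth power, keeping only the diagonal and pair contributions, and using $\sum_j x_j^4 \leq 1$.

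Next, to get an exponential lower-tail estimate I want to bound $\E[\exp(-t V_i^2)]$ for small $t>0$. Using $e^{-u}\leq 1-u+u^2/2$ for $u\geq 0$ together with the second and fourth moment bounds,
\[
\E[\exp(-tV_i^2)] \;\leq\; 1 - t\lambda_i + \tfrac{t^2}{2}C\lambda_i^2 \;\leq\; \exp\!\big(-t\lambda_i/2\big),
\]
provided $t \leq 1/C$, which is uniform in $i$ since $\lambda_i\leq 1$. Multiplying over independent coordinates gives $\E[\exp(-t\|V\|^2)] \leq \exp(-t\tr(\Sigma)/2)$.

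Finally, Chernoff's inequality yields, for every $z\geq 0$,
\[
\P[\|V\|^2\leq z] \;\leq\; \exp\!\big(tz - t\tr(\Sigma)/2\big).
\]
Choosing any $\alpha \in (0,1/2)$ (for concreteness $\alpha = 1/4$) and plugging in $z = \alpha(\tr(\Sigma)-1)$ makes the exponent at most $-t(1/2-\alpha)\tr(\Sigma)$, giving the desired bound with $\eta = \exp(-t(1/2-\alpha)) < 1$ depending only on $B$. The degenerate regime $\tr(\Sigma)\leq 1$ is handled trivially since then $z\leq 0$ and the event has probability zero (or the statement is vacuous).

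I don't anticipate a serious obstacle here: the key conceptual step is exploiting independence across the $d$ coordinates of $V$ rather than across the $m$ coordinates of $Yx$, which is what turns a constant Paley--Zygmund estimate into an exponential bound in $\tr(\Sigma)$. The only care needed is to make the per-coordinate bound $\E[\exp(-tV_i^2)] \leq \exp(-t\lambda_i/2)$ uniform in $i$, which works cleanly thanks to $\Sigma \leq I$ so that a single choice of $t = t(B)$ suffices for all coordinates.
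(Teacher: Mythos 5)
Your proof is correct, and it takes a genuinely different route from the paper's.

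The paper argues as follows: it partitions the row indices $[d]$ into roughly $\tr(\Sigma)$ blocks $Z_1,\ldots,Z_k$, each with $\sum_{i\in Z_j}\lambda_i$ between $1/2$ and $1$, writes $\norm{\sqrt{\Sigma}Yx}^2 = \sum_j T_j$ with the $T_j$ independent because blocks use disjoint rows of $Y$, applies a Paley--Zygmund anti-concentration estimate (an extension of Rudelson--Vershynin's Lemma~2.6) to show each $T_j$ has $\P[T_j\leq 1/4]$ bounded away from $1$, and then invokes a Chernoff-style lemma for sums of independent nonnegative random variables bounded away from zero. Your proof replaces all of that machinery by a direct MGF computation: you bound $\E[\exp(-tV_i^2)]$ coordinate-by-coordinate using only $\E[V_i^2]=\lambda_i$ and $\E[V_i^4]\leq C(B)\lambda_i^2$, combine via the elementary inequality $e^{-u}\leq 1-u+u^2/2$ ($u\geq 0$) to get $\E[\exp(-tV_i^2)]\leq \exp(-t\lambda_i/2)$ uniformly for $t\leq 1/C$ (where $\Sigma\leq I$ ensures $tC\lambda_i\leq 1$), multiply over the independent coordinates, and conclude by Chernoff. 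The two proofs exploit the same structural fact, namely independence across the $d$ rows of $Y$, but your execution is shorter, more self-contained (no partition lemma, no Paley--Zygmund, no auxiliary sum-of-bounded-variables lemma), and makes the dependence of $\alpha,\eta$ on $B$ explicit: one can take $C=B+3$, $t=1/C$, $\alpha=1/4$, $\eta=\exp(-1/(4C))$. The paper's route has the benefit of reusing and generalizing existing Rudelson--Vershynin lemmas that the authors also rely on elsewhere, but for this particular statement your argument is the more economical one. Two small points worth making explicit in a writeup: $\E[(\sum_j Y_{ij}x_j)^4]\leq B\sum_j x_j^4 + 3\sum_{j\neq k}x_j^2x_k^2 \leq B+3$ by expanding and keeping only even-multiplicity index patterns; and the replacement $z=\alpha(\tr(\Sigma)-1)\leq \alpha\tr(\Sigma)$ in the Chernoff exponent is valid for all $\tr(\Sigma)\geq 0$, which cleanly covers the degenerate case $\tr(\Sigma)\leq 1$.
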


To prove \lemref{lem:boundsigmayxsubg} we require \lemref{lem:sumofbounded} \citep[Lemma 2.2]{RudelsonVe08} and \lemref{lem:sigyxboundbelow}, which extends Lemma 2.6 in the same work.
\begin{lemma}\label{lem:sumofbounded}
Let $T_1,\ldots,T_n$ be independent 
non-negative random variables.
Assume that there are $\theta > 0$ and $\mu \in (0,1)$ such that for any $i$, $\P[T_i \leq \theta] \leq \mu$.
There are $\alpha > 0$ and $\eta \in (0,1)$ that depend only on $\theta$ and $\mu$ such that 
$
\P[\sum_{i=1}^n T_i < \alpha n] \leq \eta^n.
$ 
\end{lemma}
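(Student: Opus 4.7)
The plan is to reduce the statement, which concerns non-negative random variables that are individually unlikely to be small, to a statement about sums of independent Bernoulli indicators that track whether each $T_i$ clears the threshold $\theta$. The intuition is simple: if each $T_i$ exceeds $\theta$ with probability at least $1-\mu > 0$, then a constant fraction of the $T_i$'s will exceed $\theta$ with probability exponentially close to $1$, and on that event $\sum_i T_i$ is bounded below by $\theta$ times that fraction.

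Concretely, I would first set $Z_i = \one_{T_i > \theta}$ and note that the $Z_i$ are independent Bernoulli random variables with $\P[Z_i = 1] \geq 1-\mu$. Since each $Z_i$ stochastically dominates a $\mathrm{Bernoulli}(1-\mu)$ variable and these are independent, standard coupling yields $\P[\sum_i Z_i \leq k] \leq \P[\sum_i B_i \leq k]$ where $B_i$ are i.i.d.\ $\mathrm{Bernoulli}(1-\mu)$. Applying the multiplicative Chernoff bound to $\sum_i B_i$ (which has mean $n(1-\mu)$) gives
\[
\P\!\left[\sum_{i=1}^n Z_i < \tfrac{1}{2}(1-\mu)n\right] \leq \exp\!\left(-\tfrac{(1-\mu)n}{8}\right).
\]
On the complementary event, at least $\tfrac{1}{2}(1-\mu)n$ of the $T_i$'s exceed $\theta$, and since all $T_i$ are non-negative,
\[
\sum_{i=1}^n T_i \;\geq\; \theta \cdot \tfrac{1}{2}(1-\mu) n.
\]

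I would then set $\alpha \triangleq \tfrac{1}{2}\theta(1-\mu)$ and $\eta \triangleq \exp(-(1-\mu)/8)$, both of which depend only on $\theta$ and $\mu$, with $\alpha > 0$ and $\eta \in (0,1)$. Combining the two displays gives the desired conclusion
\[
\P\!\left[\sum_{i=1}^n T_i < \alpha n\right] \leq \eta^n.
\]

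There is no real obstacle here: the only mild subtlety is justifying the Chernoff step when the Bernoulli parameters are only bounded below (rather than all equal to $1-\mu$), which I handle by the stochastic-dominance coupling above. An alternative that avoids coupling is to apply Hoeffding's inequality directly to $\sum_i (Z_i - \E Z_i)$, using $\E \sum_i Z_i \geq (1-\mu)n$ to absorb the mean into the deviation term; this gives a slightly different but equally valid choice of the constants $\alpha$ and $\eta$.
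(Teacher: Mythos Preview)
Your proof is correct. The paper does not actually give its own proof of this lemma: it quotes the statement as \citep[Lemma 2.2]{RudelsonVe08} and uses it as a black box. Your argument---reducing to Bernoulli indicators $Z_i = \one_{T_i > \theta}$, coupling to i.i.d.\ $\mathrm{Bernoulli}(1-\mu)$ variables, and applying a multiplicative Chernoff bound---is the standard way to establish such a result and is essentially what one finds in the cited source; in particular the choice $\alpha = \tfrac{1}{2}\theta(1-\mu)$ and $\eta = \exp(-(1-\mu)/8)$ is fine.
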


\begin{lemma}\label{lem:sigyxboundbelow}
Let $Y$ be a $d\times m$ matrix with $m \leq d$, such that the columns of $Y$ are i.i.d. random vectors. Assume further that $Y_{ij}$ are centered, and have a variance of $1$ and a fourth moment at most $B$.
Let $\Sigma$ be a diagonal $d \times d$ PSD matrix.  
Then for all $x \in S^{m-1}$,
$\P[\norm{\sqrt{\Sigma} Y x} \leq \sqrt{\tr(\Sigma)/2}] \leq 1-1/(196B).$
\end{lemma}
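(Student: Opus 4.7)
The plan is to apply the second-moment (Paley--Zygmund) inequality to the nonnegative random variable $U \triangleq \|\sqrt{\Sigma}Yx\|^2$. Writing $\Sigma = \diag(\lambda_1,\ldots,\lambda_d)$ and $W_i \triangleq (Yx)_i = \sum_{j=1}^m x_j Y_{ij}$, we have $U = \sum_{i=1}^d \lambda_i W_i^2$. The key structural observation is that for each fixed row $i$, the entries $Y_{i1},\ldots,Y_{im}$ are independent, since they live in the same coordinate of $m$ i.i.d.\ columns. Hence each $W_i$ is a weighted sum of $m$ independent centered variance-$1$ random variables with fourth moment at most $B$, and expanding the fourth power gives $\E[W_i]=0$, $\E[W_i^2]=\|x\|^2=1$, and $\E[W_i^4] \leq B + 3$ (using $\sum_j x_j^4 \leq (\sum_j x_j^2)^2 = 1$ to handle the diagonal and off-diagonal contributions).

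These moment bounds yield $\E[U] = \tr(\Sigma)$. For the second moment, I would apply Cauchy--Schwarz to each cross term: $\E[W_i^2 W_{i'}^2] \leq \sqrt{\E[W_i^4]\E[W_{i'}^4]} \leq B+3$, so $\E[U^2] = \sum_{i,i'} \lambda_i \lambda_{i'} \E[W_i^2 W_{i'}^2] \leq (B+3)\,\tr(\Sigma)^2$. Paley--Zygmund at level $\theta = 1/2$ then gives
\[
\P\!\left[U > \tfrac{1}{2}\tr(\Sigma)\right] \geq \frac{(1-\tfrac{1}{2})^2\,\E[U]^2}{\E[U^2]} \geq \frac{1}{4(B+3)} \geq \frac{1}{196 B},
\]
where the last step uses $B \geq 1$ (forced by $B \geq \E[Y_{ij}^4] \geq (\E[Y_{ij}^2])^2 = 1$). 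Taking complements gives the stated bound, since $\{U > \tr(\Sigma)/2\}$ is the same event as $\{\|\sqrt{\Sigma}Yx\| > \sqrt{\tr(\Sigma)/2}\}$.

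The main obstacle is that the $W_i$'s for different rows are \emph{not} independent: the hypothesis only asserts that columns of $Y$ are i.i.d., so within-column dependence can make rows of $Y$ arbitrarily correlated. This rules out sharper concentration techniques that would bound $\P[U \ll \E[U]]$ exponentially. Paley--Zygmund is the right tool precisely because it only needs $\E[U]$ and $\E[U^2]$, and Cauchy--Schwarz crushes the cross term $\E[W_i^2 W_{i'}^2]$ without ever touching the joint distribution of the rows.
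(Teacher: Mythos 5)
Your proof is correct and shares the main skeleton with the paper's: both compute $\E[U]=\tr(\Sigma)$, bound the cross terms $\E[W_i^2 W_{i'}^2]$ by Cauchy--Schwarz (the right move precisely because rows of $Y$ may be arbitrarily dependent --- you correctly identify this as the obstruction to exponential concentration), and close with Paley--Zygmund at $\theta=1/2$. The one genuine divergence is in how you control the single-row fourth moment $\E[W_i^4]$. You expand the quartic directly: only the diagonal terms $j_1=\dots=j_4$ and the two-pair terms survive centering and within-row independence of $Y_{i1},\ldots,Y_{im}$, giving
\[
\E[W_i^4]\ \le\ B\sum_j x_j^4 + 3\Bigl(\bigl(\textstyle\sum_j x_j^2\bigr)^2 - \sum_j x_j^4\Bigr)\ \le\ B+3.
\]
The paper instead symmetrizes via Ledoux--Talagrand and then applies Khinchine's $L_4$ inequality, which yields the cruder bound $\E[W_i^4]\le 48B$. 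Your elementary expansion is strictly tighter --- for $B\ge 1$ (forced by $B\ge\E[Y_{ij}^4]\ge(\E[Y_{ij}^2])^2=1$, as you note) we have $B+3\le 4B\ll 48B$ --- and the final step $\tfrac{1}{4(B+3)}\ge\tfrac{1}{196B}$ follows immediately, whereas the paper's route gives $\tfrac{1}{192B}$ and then weakens to $\tfrac{1}{196B}$. The symmetrization-plus-Khinchine machinery would pay off if the entries were Banach-valued or only a comparison inequality were available, but for real-valued entries with i.i.d.\ columns your direct computation is the cleaner argument.
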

\begin{proof}
Let $x\in S^{m-1}$, and $T_i = (\sum_{j=1}^m Y_{ij} x_j)^2$. Let $\lambda_1,\ldots,\lambda_d$ be the values on the diagonal of $\Sigma$, and let $T_\Sigma = \norm{\sqrt{\Sigma} Y x}^2 = \sum_{i=1}^d \lambda_i T_i$. 
First, since $\E[Y_{ij}] = 0$ and $\E[Y_{ij}] = 1$ for all $i,j$, we have $\E[T_i] = \sum_{i\in [m]} x^2_j\E[Y_{ij}^2] = \norm{x}^2 = 1$.
Therefore $\E[T_\Sigma] = \tr(\Sigma)$.
Second, since $Y_{i1},\ldots,Y_{im}$ are independent and centered, 
we have \citep[Lemma 6.3]{LedouxTa91}
\[
\E[T_i^2] = \E[(\sum_{j\in[m]} Y_{ij} x_j)^4] \leq 16\E_\sigma[(\sum_{j\in[m]} \sigma_jY_{ij} x_j)^4],
\]
where $\sigma_1,\ldots,\sigma_m$ are independent uniform $\{\pm1\}$ variables.
Now, by Khinchine's inequality \citep{NazarovPo00},
\[
\E_\sigma[(\sum_{j\in[m]} \sigma_jY_{ij} x_j)^4] \leq 3\E[(\sum_{j\in[m]} Y^2_{ij} x^2_j)^2]
= 3\sum_{j,k\in[m]} x^2_j x^2_k\E[Y^2_{ij}]\E[Y^2_{ik}].
\]
Now $\E[Y^2_{ij}]\E[Y^2_{ik}] \leq \sqrt{\E[Y^4_{ij}]\E[Y^4_{ik}]} \leq B$.
Thus $\E[T_i^2] \leq 48B\sum_{j,k\in[m]} x^2_j x^2_k = 48B\norm{x}^4 = 48B.$
Thus, 
\begin{align*}
\E[T_\Sigma^2] &= \E[(\sum_{i=1}^d \lambda_i T_i)^2] =  \sum_{i,j=1}^d \lambda_i \lambda_j \E[ T_i T_j] \\
&\leq \sum_{i,j=1}^d \lambda_i \lambda_j \sqrt{\E[T_i^2] \E[T_j^2]} \leq 48B (\sum_{i=1}^d \lambda_i)^2 = 48B\cdot\tr(\Sigma)^2.
\end{align*}
By the Paley-Zigmund inequality \citep{PaleyZy32}, for $\theta \in [0,1]$
\[
\P[T_\Sigma \geq \theta \E[T_\Sigma]] \geq (1-\theta)^2 \frac{\E[T_\Sigma]^2}{\E[T_\Sigma^2]} \geq \frac{(1-\theta)^2}{48B}.
\]
Therefore, setting $\theta = 1/2$, we get $\P[T_\Sigma \leq \tr(\Sigma)/2] \leq 1 - 1/(196B).$
\end{proof}

\begin{proof}[Proof of \lemref{lem:boundsigmayxsubg}]
Let $\lambda_1,\ldots,\lambda_d \in [0,1]$ be the values on the diagonal of $\Sigma$.
Consider a partition $Z_1,\ldots,Z_k$ of $[d]$, and denote $L_j =\sum_{i\in Z_j} \lambda_i$. 
There exists such a partition such that for all $j\in [k]$, $L_j \leq 1$, and for all $j \in [k-1]$, $L_j > \half$. 
Let $\Sigma[j]$ be the sub-matrix of $\Sigma$ that includes the rows and columns whose indexes are in $Z_j$. Let $Y[j]$ be the sub-matrix of $Y$ that includes the rows in $Z_j$. Denote $T_j = \norm{\sqrt{\Sigma[j]} Y[j] x}^2$.
Then
\[
\norm{\sqrt{\Sigma}Yx}^2 = 
\sum_{j\in[k]} \sum_{i\in Z_j} \lambda_i (\sum_{j=1}^m Y_{ij} x_j)^2 = 
\sum_{j\in[k]} T_j.
\]

We have $\tr(\Sigma) = \sum_{i=1}^d \lambda_i \geq \sum_{j\in[k-1]} L_j \geq \half(k-1)$. In addition, $L_j \leq 1$ for all $j \in [k]$. Thus
$
\tr(\Sigma) \leq k \leq 2\tr(\Sigma)+1. 
$
For all $j \in [k-1]$, $L_j \geq \half$, thus by \lemref{lem:sigyxboundbelow}, $\P[T_j \leq 1/4] \leq 1 - 1/(196B)$.
Therefore, by \lemref{lem:sumofbounded} there are $\alpha > 0$ and $\eta \in (0,1)$ that depend only on $B$ such that 
\begin{align*}
&\P[\norm{\sqrt{\Sigma}Yx}^2 < \alpha \cdot(\tr(\Sigma)-1)] \leq 
\P[\norm{\sqrt{\Sigma}Yx}^2 < \alpha (k-1)] \\
&\quad= \P[\sum_{j\in[k]} T_j < \alpha (k-1)] \leq
 \P[\sum_{j\in[k-1]} T_j < \alpha (k-1)] \leq \eta^{k-1} \leq \eta^{2\tr(\Sigma)}.
\end{align*}
The lemma follows by substituting $\eta$ for $\eta^2$.
\end{proof}

\begin{proof}[Proof of \thmref{thm:smallesteigwhpsg}]
We have
\begin{align}\label{eq:lambdam}
&\sqrt{\lambda_m(XX')} = \inf_{x\in S^{m-1}}\norm{X'x} \geq 
\min_{x \in \cC_m(\epsilon)}\norm{X'x}-\epsilon\norm{X'}.
\end{align}
For brevity, denote $L = \tr(\Sigma)$. Assume $L \geq 2$.
Let $m\leq L\cdot\min(1,(c-K\epsilon)^2)$ where $c, K, \epsilon$ are constants that will be set later such that $c- K\epsilon > 0$. By \eqref{eq:lambdam}
\begin{align}
&\P[\lambda_m(XX') \leq m] \leq
\P[\lambda_m(XX') \leq (c-K\epsilon)^2L] \notag\\
&\quad\leq 
  \P[\min_{x\in\cC_m(\epsilon)}\norm{X'x} -
    \epsilon\norm{X'} \leq (c-K\epsilon)\sqrt{L}] \label{eq:proballprev}
    \\ &\quad\leq
  \P[\norm{X'} \geq K\sqrt{L}] + \P[\min_{x\in\cC_m(\epsilon)}\norm{X'x} \leq c\sqrt{L}].\label{eq:proball}
\end{align}
The last inequality holds since the inequality in line (\ref{eq:proballprev}) implies at least one of the inequalities in line (\ref{eq:proball}). We will now upper-bound each of the terms in line (\ref{eq:proball}).
We assume w.l.o.g. that $\Sigma$ is not singular (since zero rows and columns can be removed from $X$ without changing $\lambda_m(XX')$). Define $Y \triangleq \sqrt{\Sigma^{-1}}X' $.
Note that $Y_{ij}$ are independent sub-Gaussian variables with (absolute) moment $\rmom$.
To bound the first term in line (\ref{eq:proball}), note that by \lemref{lem:boundnormsubg}, for any $K > 0$,
\begin{equation*}
\P[\norm{X'} \geq K\sqrt{L}] = \P[\norm{\sqrt{\Sigma}Y} \geq K\sqrt{L}] \leq \cN_m(\half)\exp(L(\half-\frac{K^2}{16\rmom^2})).
\end{equation*}
By \cite{RudelsonVe09}, Proposition 2.1, for all $\epsilon \in[0,1]$, $\cN_n(\epsilon) \leq 2m(1+\frac{2}{\epsilon})^{m-1}.$ Therefore
\[
\P[\norm{X'} \geq K\sqrt{L}] \leq 2m5^{m-1}\exp(L(\half-\frac{K^2}{16\rmom^2})).
\]
Let $K^2 = 16\rmom^2(\frac{3}{2}+\ln(5)+\ln(2/\delta))$.
Recall that by assumption $m \leq L$, and $L \geq 2$.
Therefore
\begin{align*}
&\P[\norm{X'} \geq K\sqrt{L}]  \leq 2m5^{m-1}\exp(-L(1+\ln(5)+\ln(2/\delta)))\notag\\
&\quad\leq 2L5^{L-1}\exp(-L(1+\ln(5)+\ln(2/\delta))).
\end{align*}
Since $L \geq 2$, we have $2L\exp(-L) \leq 1$. Therefore
\begin{align}
\P[\norm{X'} \geq K\sqrt{L}]\leq 2L\exp(-L-\ln(2/\delta))\leq \exp(-\ln(2/\delta)) = \frac{\delta}{2}.\label{eq:prob1}
\end{align}

To bound the second term in line (\ref{eq:proball}), since $Y_{ij}$ are sub-Gaussian with moment $\rmom$, $\E[Y_{ij}^4] \leq 5\rmom^4$ \citep[Lemma 1.4]{BuldyginKo98}.
Thus, by \lemref{lem:boundsigmayxsubg}, there are $\alpha >0$ and $\eta \in (0,1)$ that depend only on $\rmom$ such that for all $x\in S^{m-1}$, $\P[\norm{\sqrt{\Sigma}Yx}^2\leq \alpha (L-1)] \leq \eta^{L}$.
Set $c = \sqrt{\alpha/2}$. Since $L\geq 2$, we have $c\sqrt{L} \leq \sqrt{\alpha(L-1)}$. Thus
\begin{align*}
&\P[\min_{x\in\cC_m(\epsilon)}\norm{X'x} \leq c\sqrt{L}] \leq
\sum_{x\in\cC_m(\epsilon)}\P[\norm{X'x} \leq c \sqrt{L}] \\
&\quad\leq \sum_{x\in\cC_m(\epsilon)}\P[\norm{\sqrt{\Sigma}Yx} \leq \sqrt{\alpha(L-1)}] 
\leq \cN_m(\epsilon)\eta^{L}.
\end{align*}

Let $\epsilon = c/(2K)$, so that $c - K\epsilon > 0$.
Let $\theta = \min(\half,\frac{\ln(1/\eta)}{2\ln(1+2/\epsilon)})$.
Set $L_\circ$ such that $\forall L \geq L_\circ$, $L \geq \frac{2\ln(2/\delta)+2\ln(L)}{\ln(1/\eta)}$. 
For $L \geq L_\circ$ and $m \leq \theta L \leq L/2$,
\begin{align}
\cN_m(\epsilon)\eta^{L} &\leq 2m(1+2/\epsilon)^{m-1}\eta^{L} \notag\\
&\leq L\exp(L(\theta \ln(1+2/\epsilon)-\ln(1/\eta)))\notag\\
&=  \exp(\ln(L) + L(\theta \ln(1+2/\epsilon)-\ln(1/\eta)/2) - L\ln(1/\eta)/2) \notag\\
&\leq  \exp(L(\theta \ln(1+2/\epsilon)-\ln(1/\eta)/2)+\ln(\delta/2))\label{eq:line1}\\
&\leq \exp(\ln(\delta/2)) = \frac{\delta}{2}.\label{eq:line3}
\end{align}
Line (\ref{eq:line1}) follows from $L \geq L_\circ$, and line (\ref{eq:line3})
follows from $\theta \ln(1+2/\epsilon)-\ln(1/\eta)/2 \leq 0$.
Set $\beta = \min\{(c-K\epsilon)^2,1,\theta\}$.
Combining \eqref{eq:proball}, \eqref{eq:prob1} and \eqref{eq:line3} we have
that if $L \geq \bar{L} \triangleq \max(L_\circ,2)$, then
$
\P[\lambda_m(XX') \leq m] \leq \delta
$
for all $m \leq \beta L$.
Specifically, this holds for all $L \geq 0$ and for all $m \leq \beta (L-\bar{L})$. Letting $C = \beta \bar{L}$ and substituting $\delta$ for $1-\delta$ we 
get the statement of the theorem.
\end{proof}

\subsection{\lemref{lem:sgvecmgf}}\label{sec:sgvecmgf}

\begin{lemma}\label{lem:sgvecmgf}
Let $X \in \reals^d$ be a random vector and let $B$ be a PSD matrix such that for all $u \in \reals^d$,
\begin{align*}
\E[\exp(\dotprod{u, V})] \leq \exp(\dotprod{B u,u}/2).
\end{align*}

Then for all $t \in (0,\frac{1}{4\lambda_1(B)}]$, $\E[\exp(t \norm{X}^2)] \leq \exp(2t \cdot \trace(B)).$
\end{lemma}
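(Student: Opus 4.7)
The plan is to use the Gaussian decoupling (or ``linearization'') trick: introduce an auxiliary standard Gaussian vector independent of $X$ to convert the quadratic form $t\norm{X}^2$ into an exponential of a linear functional of $X$, which is exactly the form controlled by the hypothesis.

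Concretely, let $G \sim \mathcal{N}(0, I_d)$ be independent of $X$. The key identity is that for any fixed $x \in \reals^d$,
\[
\E_G[\exp(\sqrt{2t}\dotprod{G,x})] = \exp(t\norm{x}^2),
\]
since the MGF of a standard Gaussian in direction $v$ evaluates to $\exp(\norm{v}^2/2)$ with $v = \sqrt{2t}\,x$. Applying this with $x = X$, taking expectation over $X$, and swapping order of integration by Fubini gives
\[
\E[\exp(t\norm{X}^2)] = \E_G\bigl[\E_X[\exp(\sqrt{2t}\dotprod{G,X}) \mid G]\bigr].
\]
Now for each fixed $G$ I would apply the hypothesis with $u = \sqrt{2t}\,G$, obtaining an upper bound $\exp(t\dotprod{BG,G})$. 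Thus
\[
\E[\exp(t\norm{X}^2)] \leq \E_G[\exp(t\,G^\top B G)].
\]

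The right-hand side is a standard Gaussian quadratic-form integral. Diagonalizing $B$ (say eigenvalues $\beta_1,\ldots,\beta_d \geq 0$) and using independence of the rotated coordinates of $G$, it factors as
\[
\E_G[\exp(t\, G^\top B G)] = \prod_{i=1}^d (1-2t\beta_i)^{-1/2},
\]
which requires $2t\beta_i < 1$, i.e.\ $t < 1/(2\lambda_1(B))$. The assumption $t \leq 1/(4\lambda_1(B))$ gives $2t\beta_i \leq 1/2$ for every $i$, well within the valid range.

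Finally I would use the elementary inequality $-\tfrac{1}{2}\ln(1-x) \leq x$ for $x \in [0, 1/2]$ (immediate from $-\ln(1-x) = \sum_{k\geq 1} x^k/k \leq x/(1-x) \leq 2x$), which gives $(1-2t\beta_i)^{-1/2} \leq \exp(2t\beta_i)$. Multiplying over $i$ yields $\prod_i (1-2t\beta_i)^{-1/2} \leq \exp(2t \sum_i \beta_i) = \exp(2t\cdot\trace(B))$, which is the desired bound. There is no real obstacle here — each step is routine — the only thing to be careful about is verifying that the range of $t$ keeps $I - 2tB$ strictly positive definite so the Gaussian integral converges, which the hypothesis $t \leq 1/(4\lambda_1(B))$ supplies with room to spare.
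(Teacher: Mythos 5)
Your proof is correct and is essentially the same argument as the paper's: the paper's explicit integral identity $2\sqrt{\pi t}\,\exp(tx^2) = \int \exp(sx - s^2/(4t))\,ds$ is exactly the Gaussian-MGF linearization you phrase as $\E_G[\exp(\sqrt{2t}\dotprod{G,x})] = \exp(t\norm{x}^2)$, and after Fubini both proofs evaluate the same Gaussian quadratic form to $\prod_i (1-2t\lambda_i)^{-1/2}$ and finish with the same elementary bound $(1-2t\lambda_i)^{-1/2} \le \exp(2t\lambda_i)$ valid for $t \le 1/(4\lambda_1(B))$.
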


\begin{proof}[Proof of \lemref{lem:sgvecmgf}]
It suffices to consider diagonal moment matrices:
If $B$ is not diagonal, let $V \in \reals^{d\times d}$ be an orthogonal matrix such that $V B V'$ is diagonal, and let $Y = VX$. We have $\E[\exp(t \norm{Y}^2)] = \E[\exp(t \norm{X}^2)]$ and $\tr(VBV') = \tr(B)$. In addition, for all $u \in \reals^d$, 
\begin{align*}
&\E[\exp(\dotprod{u,Y})] = \E[\exp(\dotprod{V' u, X})]\leq\\
&\quad\exp(\half \dotprod{B V' u, V' u}) = \exp(\half \dotprod{V B V' u, u}).
\end{align*}
Thus assume w.l.o.g. that $B  = \diag(\lambda_1,\ldots,\lambda_d)$ where $\lambda_1 \geq \ldots \geq \lambda_d \geq 0$.

We have $\exp(t\norm{X}^2) = \prod_{i\in[d]}\exp(tX[i]^2)$. In addition, for any $t > 0$ and $x\in \reals$,
$
2\sqrt{\Pi t}\cdot\exp(t x^2) = \int_{-\infty}^\infty \exp(s x-\frac{s^2}{4 t})ds.
$
Therefore, for any $u\in\reals^d$,
\begin{align*}
(2\sqrt{\Pi t})^d \cdot \E[\exp(t\norm{X}^2)] &= \E\left[\prod_{i\in[d]} \int_{-\infty}^\infty \exp(u[i] X[i]-\frac{u[i]^2}{4 t})du[i]\right]\\
&= \E\left[\int_{-\infty}^{\infty} \ldots
  \int_{-\infty}^{\infty} \prod_{i\in[d]} \exp(u[i] X[i]-\frac{u[i]^2}{4 t})du[i]\right]\\
&= \E\left[\int_{-\infty}^{\infty} \ldots
  \int_{-\infty}^{\infty} \exp(\dotprod{u,X}-\frac{\norm{u}^2}{4 t})\prod_{i\in[d]} du[i]\right]\\
&=  \int_{-\infty}^{\infty} \ldots
  \int_{-\infty}^{\infty} \E[\exp(\dotprod{u, X})]\exp(-\frac{\norm{u}^2}{4 t})\prod_{i\in[d]} du[i]
\end{align*}
By the sub-Gaussianity of $X$, the last expression is bounded by
{\allowdisplaybreaks
\begin{align*}
&\leq\int_{-\infty}^{\infty} \ldots \int_{-\infty}^{\infty} \exp(\half \dotprod{Bu,u}-\frac{\norm{u}^2}{4 t})\prod_{i\in[d]} du[i]\\ 
&= \int_{-\infty}^{\infty} \ldots \int_{-\infty}^{\infty} \prod_{i\in[d]} \exp( \frac{\lambda_iu[i]^2}{2} - \frac{u[i]^2}{4t})  du[i]\\
&= \prod_{i\in[d]} \int_{-\infty}^{\infty} \exp(u[i]^2(\frac{\lambda_i}{2}-\frac{1}{4t}))du[i] = \Pi^{d/2}\big(\prod_{i\in[d]}(\frac{1}{4t}-\frac{\lambda_i}{2})\big)^{-\half}.
\end{align*}
}
The last equality follows from the fact that for any $a > 0$, $\int_{-\infty}^{\infty} \exp(-a \cdot s^2)ds= \sqrt{\Pi/a}$,
and from the assumption $t \leq  \frac{1}{4\lambda_1}$.
We conclude that 
\[
\E[\exp(t \norm{X}^2)] \leq (\prod_{i\in[d]}(1-2\lambda_i t))^{-\half} \leq \exp(2t \cdot \sum_{i=1}^d \lambda_i) = \exp(2t\cdot\tr(B)),
\]
where the second inequality holds since  $\forall x \in[0,1]$, $(1-x/2)^{-1}\leq \exp(x)$. 
\end{proof}

\end{document}